\documentclass{article}

\PassOptionsToPackage{numbers, compress}{natbib}

\usepackage[preprint]{neurips_2026}

\usepackage{amsmath,amsfonts,bm}

\def\eqref#1{equation~\ref{#1}}
\def\1{\bm{1}}

\DeclareMathAlphabet{\mathsfit}{\encodingdefault}{\sfdefault}{m}{sl}
\SetMathAlphabet{\mathsfit}{bold}{\encodingdefault}{\sfdefault}{bx}{n}

\def\gJ{{\mathcal{J}}}

\usepackage[utf8]{inputenc} % allow utf-8 input
\usepackage[T1]{fontenc}    % use 8-bit T1 fonts
\usepackage[backref=section]{hyperref}
\usepackage{url}

\usepackage{booktabs}
\usepackage{multirow}
\usepackage{colortbl}
\usepackage{multicol}
\usepackage{amsfonts}
\usepackage{nicefrac}
\usepackage{microtype}
\usepackage[most]{tcolorbox}
\usepackage[dvipsnames]{xcolor}
\usepackage{latexsym}
\usepackage{graphicx}
\usepackage{float}
\usepackage{subcaption}
\usepackage{wrapfig}
\usepackage{dirtytalk}
\usepackage{enumitem}
\usepackage{makecell}

\usepackage[toc,page,header]{appendix}

\usepackage{minitoc}
\usepackage{bm}

\usepackage{tabularx} 
\usepackage{arydshln}
\usepackage{ragged2e} 
\newcolumntype{L}{>{\RaggedRight\hangafter=1\hangindent=0em}X}

\usepackage{amsmath}
\usepackage{amssymb}
\usepackage{mathtools}
\usepackage{amsthm}

\usepackage[linesnumbered,ruled,vlined]{algorithm2e}

\hypersetup{
    colorlinks=true,
    linkcolor=Maroon,
    citecolor=ForestGreen,
    filecolor=magenta,      
    urlcolor=magenta,
}

\usepackage[capitalize,noabbrev]{cleveref}
\crefname{section}{§}{§§}
\Crefname{section}{§}{§§}
\crefname{lemma}{lemma}{lemma}
\Crefname{lemma}{Lemma}{Lemma}

\usepackage{calligra}
\DeclareMathAlphabet{\mathcalligra}{T1}{calligra}{m}{n}

\usepackage{pifont}

\theoremstyle{plain}
\newtheorem{theorem}{Theorem}[section]
\newtheorem{proposition}[theorem]{Proposition}

\theoremstyle{definition}

\theoremstyle{remark}

\renewcommand{\paragraph}[1]{\vspace{1mm}\noindent\textbf{#1}}

\DeclareCaptionLabelFormat{cont}{#1~#2\alph{ContinuedFloat}}
\tcbset{
  promptbox/.style={
    top=10pt,
    colback=lightgray!20,
    colframe=Black,
    colbacktitle=NavyBlue,
    enhanced,
    center,
    attach boxed title to top center={yshift=-0.1in,xshift=0.0in},
    boxed title style={boxrule=0pt,colframe=white,},
  }
}
\newtcolorbox{promptbox}[2][]{promptbox, title=#2,#1}
\tcbset{
  takeawaybox/.style={
    top=10pt,
    colback=lightgray!20,
    colframe=Black,
    colbacktitle=BurntOrange,
    enhanced,
    center,
    attach boxed title to top center={yshift=-0.1in,xshift=0.0in},
    boxed title style={boxrule=0pt,colframe=white,},
  }
}
\newtcolorbox{takeawaybox}[2][]{takeawaybox, title=#2,#1}
\tcbset{
  observationbox/.style={
    top=10pt,
    colback=lightgray!20,
    colframe=Black,
    colbacktitle=YellowGreen,
    enhanced,
    center,
    attach boxed title to top center={yshift=-0.1in,xshift=0.0in},
    boxed title style={boxrule=0pt,colframe=white,},
  }
}
\newtcolorbox{observationbox}[2][]{observationbox, title=#2,#1}

\newtcbtheorem[number within=section]{prompt}{Prompt}{
    top=10pt,
    colback=lightgray!20,
    colframe=Black,
    colbacktitle=NavyBlue,
    enhanced,
    center,
    attach boxed title to top center={yshift=-0.1in,xshift=0.0in},
    boxed title style={boxrule=0pt,colframe=white,},
    breakable
}{pt}

\newtcbtheorem[number within=section]{example}{Case}{
    top=10pt,
    colback=lightgray!20,
    colframe=Black,
    colbacktitle=Tan,
    enhanced,
    center,
    attach boxed title to top center={yshift=-0.1in,xshift=0.0in},
    boxed title style={boxrule=0pt,colframe=white,},
    breakable
}{em}
\newtcbtheorem[number within=section]{response}{Response}{
    top=10pt,
    colback=lightgray!20,
    colframe=Black,
    colbacktitle=YellowGreen,
    enhanced,
    center,
    attach boxed title to top center={yshift=-0.1in,xshift=0.0in},
    boxed title style={boxrule=0pt,colframe=white,},
    breakable
}{rp}

\usepackage{xspace}
\newcommand{\name}[0]{\textsc{{AIPO}}\xspace}
\newcommand{\drop}[1]{\textcolor{red}{$\downarrow${\scriptsize #1}}}

\title{\name: Learning to Reason from Active Interaction}

\author{ 
Junnan Liu, Linhao Luo, Thuy-Trang Vu \& Gholamreza Haffari \\
Department of Data Science and AI, Faculty of Information Technology, \\
Monash University, Australia \\
\texttt{junnan.liu@monash.edu}
}

\begin{document}
\doparttoc % Tell to minitoc to generate a toc for the parts
\faketableofcontents % Run a fake tableofcontents command for the partocs

\maketitle

% \begin{abstract}
%     Recent advancements in large language models have demonstrated remarkable reasoning abilities to solve complex tasks. 
%     However, these gains come with significant computational costs, limiting their practical deployment. 
%     A promising direction is to distill reasoning skills from larger teacher models into smaller, more efficient student models, yet existing data-centric distillation approaches suffer from passive learning, over-learning on simple tasks, and persistent knowledge gaps. 
%     To overcome these limitations, we introduce \name, a novel framework for adaptive and active distillation. 
%     In \name, student LLMs interact with teacher LLMs modeled as environments, receiving feedback tokens to guide their reasoning process and selectively updating their capabilities when necessary. 
%     To address the off-policy and gradient vanishing challenges introduced by feedback tokens, we devise a tailored importance sampling and clipping strategy within a unified objective that both incentivizes reasoning and injects knowledge into student LLMs.
%     Extensive experiments show that \name significantly enhances distillation performance, offering a scalable path for equipping compact LLMs with advanced reasoning abilities.
% \end{abstract}

\begin{abstract}
Recent advances in large language models (LLMs) have demonstrated remarkable reasoning capabilities, largely stimulated by Reinforcement Learning with Verifiable Rewards (RLVR). 
However, existing RL algorithms face a fundamental limitation: their exploration remains largely constrained by the inherent capability boundary of the policy model. 
Although recent methods introduce external expert demonstrations to extend this boundary, they typically rely on complete trajectory-level guidance, which is sample-inefficient, information-sparse, and may confine exploration to a static guidance space. 
Inspired by the potential of multi-agent systems, we propose \textbf{\name}, an enhanced reinforcement learning framework that improves LLM reasoning through active multi-agent interaction during exploration. 
Specifically, \name enables the policy model to proactively consult three functional collaborative agents, \textit{Verify Agent}, \textit{Knowledge Agent}, and \textit{Reasoning Agent}, when encountering reasoning bottlenecks, thereby receiving fine-grained and targeted guidance to actively expand its capability boundary during training. 
We further introduce a tailored importance sampling coefficient together with a clipping strategy to mitigate the off-policy bias and gradient vanishing issues that arise when learning from agent-provided feedback. 
After training, the policy model performs reasoning independently without relying on collaborative agents.
Extensive experiments on diverse reasoning benchmarks, including AIME, MATH500, GPQA-Diamond, and LiveCodeBench, show that \name consistently improves reasoning performance, generalizes robustly across different policy models and RLVR algorithms, and effectively expands the reasoning capability boundary of the policy model.
\end{abstract} 
\section{Introduction} \label{sec:intro}

Large language models (LLMs) have demonstrated strong reasoning abilities, enabling them to solve complex mathematical, coding, and scientific tasks \citep{openai2024o1, openai2024o3, openai2025gpt5, abs-2501-12948, abs-2501-12599, abs-2505-09388, abs-2507-06261, abs-2507-15855, xai2025grok4}. 
A key factor behind this progress is Chain-of-Thought (CoT) reasoning, where models explore and reflect over intermediate reasoning steps to construct robust reasoning processes before producing final answers \citep{abs-2503-09567}. 
These capabilities are largely stimulated by Reinforcement Learning with Verifiable Rewards (RLVR) \citep{SchulmanWDRK17, abs-2402-03300}, which encourages LLMs, as policy models, to explore diverse reasoning trajectories and exploit the successful trajectories under verifiable reward signals, thereby learning stronger reasoning policies that support self-improvement and scalable inference-time reasoning \citep{abs-2408-03314}.
% \lh{We need add one more sentence about the exploitation: internalize (learning) the explored trajectory that can be linked to out off-policy training}

Despite these advances, existing RLVR frameworks remain fundamentally constrained by the policy model's pretrained knowledge and capabilities, as illustrated in \Cref{fig:intro} (A) \citep{abs-2504-13837, LiaoXCLHZLW25, abs-2509-25666, HeFW25, abs-2504-14945, abs-2506-19767}. 
Although RLVR encourages exploration over reasoning trajectories, the reachable search space is still largely bounded by what the policy model has already acquired during pretraining. 
Consequently, RLVR mainly improves the efficiency of searching within the model's existing capability region, which fundamentally limits the reasoning ceiling of the trained model, particularly for smaller LLMs \citep{abs-2504-13837, abs-2504-10478, abs-2504-07912}.

To overcome this limitation, recent studies seek to enhance model performance and expand capability boundaries by leveraging guidance from stronger external expert models such as expert trajectories~\citep{abs-2501-12948} or critiques~\citep{abs-2509-26306}, primarily through supervised fine-tuning~\citep{abs-2509-06948,abs-2601-18734} or offline reinforcement learning~\citep{abs-2508-11408,abs-2504-14945} with expert demonstrations, as illustrated in \Cref{fig:intro} (B) \citep{abs-2509-04419}. 
However, these methods typically depend on complete expert trajectories, which are costly to sample, information-sparse, and often redundant for training. 
Moreover, full-trajectory learning provides only coarse-grained supervision, offering limited fine-grained guidance for identifying and resolving intermediate reasoning bottlenecks~\citep{abs-2510-25992}. 
It also confines exploration to a static expert-generated distribution, leaving the trained model susceptible to the capability limits and biases of the external expert. 

\begin{figure}[t]
    \centering
    % \vspace{-3.0em}
    \includegraphics[width=1.\textwidth]{figs/intro.pdf}
    \caption{Comparison between existing methods and the proposed \name. (A) Motivation: current RLVR suffers from restricted exploration and is bounded by pretrained LLM capability. (B) Conventional approaches rely on expert trajectories from superior teacher LLMs to enhance reasoning. (C) We propose to expand the reasoning boundary via active interaction.} \label{fig:intro} 
    \vspace{-1.0em}
\end{figure}

Motivated by self-evolving and collaborative multi-agent systems~\citep{abs-2510-08529, abs-2602-23008,abs-2510-23595,abs-2506-19767}, which transcend the limitations of monolithic models through collaboration and communication among multiple LLMs, we propose \textbf{\name} (\textbf{A}ctive and \textbf{I}nteractive \textbf{P}olicy \textbf{O}ptimization). 
As illustrated in \Cref{fig:intro} (C), \name is a novel reinforcement learning paradigm that expands the reasoning boundary of the policy model through active interaction with external collaborative agents. 
Compared with methods that rely on complete expert demonstrations or trajectory-level critiques, \name provides finer-grained guidance within a more flexible exploration space, without necessarily requiring a stronger expert model. 
Specifically, \name introduces three collaborative agents (collaborators): \ding{182} \textit{Verify Agent}, which verifies intermediate conclusions; \ding{183} \textit{Knowledge Agent}, which provides necessary domain knowledge; and \ding{184} \textit{Reasoning Agent}, which assists in resolving encountered sub-problems. 
During exploration, the policy model autonomously selects suitable collaborators, integrates their responses to advance reasoning, and constructs \emph{mixed-policy} reasoning trajectories that extend beyond its initial capability boundary. 
For exploitation, we further introduce a tailored importance sampling coefficient together with a clipping strategy to mitigate the off-policy bias and gradient vanishing issues arising from learning with collaborator-provided feedback. 
After training, the policy model reasons independently without relying on external collaborators, having internalized the knowledge and reasoning skills acquired through interaction. 

We conduct extensive experiments on diverse reasoning benchmarks, including AIME24, AIME25, MATH500~\citep{HendrycksBKABTS21}, LiveMathBench~\citep{abs-2412-13147}, GPQA-Diamond~\citep{abs-2311-12022}, MBPP~\citep{abs-2108-07732}, LiveCodeBench~\citep{JainHGLYZWSSS25}, and Reasoning-Gym~\citep{abs-2505-24760}. 
The results demonstrate that \name consistently outperforms competitive baselines and achieves robust gains on both in-domain and out-of-domain evaluations. 
Further experiments demonstrate that \name generalizes across different policy models and collaborator backbones, including the Qwen~\citep{abs-2412-15115} and Llama~\citep{abs-2407-21783} families. 
We also show that \name remains effective across different RLVR algorithms, including GRPO~\citep{abs-2402-03300}, DAPO~\citep{abs-2503-14476}, and GSPO~\citep{abs-2507-18071}.
Additional experimental results and analyses further indicate that \name effectively expands the capability boundaries of policy models. 

\section{Preliminaries} \label{sec:preliminaries} 

\paragraph{Reinforcement Learning for LLM Reasoning.}
Reinforcement Learning with Verifiable Rewards (RLVR) has been widely adopted to improve the reasoning abilities of large language models (LLMs) \citep{abs-2501-12948, abs-2501-12599, abs-2505-09388, openai2024o1, openai2024o3, abs-2507-06261, abs-2503-24290}.
In practice, Proximal Policy Optimization (PPO)~\citep{SchulmanWDRK17} is among the most commonly used policy gradient methods for LLM post-training \citep{Ouyang0JAWMZASR22, abs-2503-24290}.
% PPO follows an actor-critic framework, in which a policy model (\textit{actor}) is optimized to maximize rewards and a value model (\textit{critic}) estimates state values.
% To stabilize training, PPO maximizes a clipped surrogate objective that restricts policy updates within a range controlled by $\epsilon$.
% Given an input distribution $P$ and policy $\pi_\theta$. 
% The advantage $A_t$ is typically estimated using Generalized Advantage Estimation (GAE)~\citep{SchulmanMLJA15}, defined via the temporal-difference (TD) error $\delta_t = r_t + \gamma V_{t+1} - V_t$, with $A_t = \sum_{i=0}^{\infty} \gamma^i \delta_{t+i}$.
Although effective, PPO requires training a separate value network, which introduces additional computational overhead. 
% \lh{We do not need to mention the PPO here as we are not using it. We can start with the RLVR?}
To address this, several critic-free RL methods replace the value estimate with reward-based baselines, including ReMax~\citep{LiXZL00L24}, RLOO~\citep{AhmadianCGFKPUH24}, GRPO~\citep{abs-2501-12948, abs-2402-03300}, and REINFORCE++~\citep{abs-2501-03262}.
These methods typically optimize the following objective:
\begin{equation} \label{eq:standard_clipped_surrogate}
    \gJ(\theta) \!=\! \mathbb E_{q \sim P,\{\bm \tau_i\} \sim \pi_{\theta_{\text{old}}}}\!\Bigg[\!\frac1G\!\sum_{i=1}^G\!\frac1{|\bm \tau_i|}\!\sum_{t=1}^{|\bm \tau_i|}\!\Big\{ \min\!\big(\rho_{i,t}\tilde A_t^i,\,\text{clip}(\rho_{i,t},1\!-\!\epsilon,1\!+\!\epsilon)\,\tilde A_t^i\big)\!-\!\beta\,D_{\text{KL}}\!\big[\pi_\theta\|\pi_{\text{ref}}\big]\Big\} \Bigg],
\end{equation}
where ${\bm{\tau}_i} = \{\bm{\tau}_1,\dots,\bm{\tau}_G\} \sim \pi_{\theta_{\text{old}}}(\cdot|q)$ denotes a group of $G$ trajectories sampled from the policy during rollout for a query $q$ drawn from the training distribution $P$,
$\tilde A_t^i$ is the normalized advantage of the $i$-th trajectory computed using a reward baseline, $\rho_{i,t} = \frac{\pi_\theta \left( \bm{\tau}_{i,t} \mid \bm{\tau}_{i,<t} \right)}{\pi_{\theta_{\text{old}}} \left( \bm{\tau}_{i,t} \mid \bm{\tau}_{i,<t} \right)}$ is the importance sampling coefficient that corrects the distribution shift between the current policy and the rollout policy, the clipping function is applied to prevent excessive policy updates, and $D_{\text{KL}}[\pi_\theta\|\pi_{\text{ref}}]$ regularizes the policy toward a reference policy $\pi_{\text{ref}}$.

\paragraph{Off-Policy Enhanced RLVR.}
Recent methods~\citep{abs-2504-14945, abs-2509-04419, abs-2508-11408, abs-2509-06948} introduce stronger teacher models, denoted as $\pi_{\text{ext}}$, to expand the capability boundary of the policy model $\pi_\theta$ by providing expert demonstrations during training.
In this setting, sampled trajectories are divided into two groups: on-policy trajectories generated by the policy model, $\{\bm{\tau}_1,\dots,\bm{\tau}_N\} \sim \pi_\theta(\cdot|q)$, and off-policy trajectories generated by the external model, $\{\bm{\tau}_1^\prime,\dots,\bm{\tau}^\prime_M\} \sim \pi_{\text{ext}}(\cdot|q)$.
These methods typically optimize a combined objective that incorporates both trajectory types:
\begin{equation}
    \gJ_{c}(\theta) = \gJ \left(\theta, \{\bm{\tau}_i\}_{i=1}^N \right) + \gJ_{\text{ext}} \left(\theta, \phi, \{\bm{\tau}_i^\prime\}_{i=1}^M \right),
\end{equation}
where $\gJ(\theta, \{\bm{\tau}_i\})$ denotes the original RLVR objective computed over on-policy trajectories, $\gJ_{\text{ext}}(\theta, \phi, \{\bm{\tau}_i^\prime\})$ denotes an auxiliary loss learned from off-policy trajectories and $\phi$ denotes the parameters of the teacher model. 
This auxiliary term is typically instantiated as an SFT objective~\citep{abs-2508-11408,abs-2506-07527,abs-2506-19767} or a modified RLVR objective that accounts for the \textit{off-policy} discrepancy~\citep{abs-2504-14945}.

% \lh{We need some actual cases or equations here to illustrate how the off-policy trajectories are incorporated into the objective. For example, the combination of SFT and RL loss to show the difference with our unified objectives.}
% In this work, we propose a novel framework that incorporates active interactive exploration into the RLVR paradigm, as detailed in \Cref{sec:method}. 
% In \name, the sampled trajectories $\bm{\tau}_i$ can be viewed as drawn from a mixture of the policy model and external collaborators: $\pi_{\text{ext}}(\bm{\tau}_{E}) \cdot \pi_{\theta}(\bm{\tau}_{I})$, where $\bm{\tau}_{E}$ denotes external tokens generated by external agents and $\bm{\tau}_{I}$ denotes internal tokens generated by the policy model.

% the objective is
% \begin{equation} \label{eq:ppo_obj}
%     \gJ(\theta)=
%     \mathbb E_{q\sim P,\bm\tau\sim\pi_\theta}
%     \left[
%     \frac1{|\bm\tau|}
%     \sum_{t=1}^{|\bm\tau|}
%     \min\left(
%     \rho_t A_t,
%     \text{clip}(\rho_t,1-\epsilon,1+\epsilon)A_t
%     \right)
%     \right],
% \end{equation}
% where $\rho_t = \frac{\pi_\theta(\bm\tau_{(t)}|q,\bm\tau_{(\le t)})}{\pi_{\theta_{\text{old}}}(\bm\tau_{(t)}|q,\bm\tau_{(\le t)})}$
% is the importance sampling coefficient between the current and previous policies.
\section{Methodology} \label{sec:method} 

In this section, we introduce the details of \name, which consists of two main components: \ding{182} \textit{exploration}, an active and interactive rollout process based on external collaboration (\Cref{sec:interaction}); and \ding{183} \textit{exploitation}, an optimization process designed to mitigate off-policy errors and vanishing gradients when learning from external tokens (\Cref{sec:learn_interaction}).

\subsection{Enhanced Rollout Based on Active Interaction} \label{sec:interaction}

To expand the capability boundary of the policy model $\pi_\theta$, we design an enhanced rollout process based on active multi-agent interaction. 
This enables the policy model to seek targeted assistance from external collaborators when encountering reasoning bottlenecks, thereby generating higher-quality trajectories for subsequent learning.

Given a question $q$, the policy model $\pi_\theta$ first performs basic reasoning using its internal knowledge and capabilities, such as problem decomposition, solution planning, and elementary arithmetic operations~\citep{Wei0SBIXCLZ22}. 
If the policy model can solve the question independently, external guidance is unnecessary, which is often overlooked by existing methods~\citep{abs-2410-18982, abs-2411-16489, abs-2501-19393, abs-2506-04178}. 
Conversely, when the policy model identifies uncertainty or an unresolved sub-problem during rollout, it may actively invoke an external collaborator to obtain targeted assistance. 
Under outcome-guided RLVR optimization, the policy model gradually learns when and how to collaborate effectively, since beneficial interactions are reinforced through trajectories that lead to correct final answers. 
This forms a mutually reinforcing process: better collaboration produces higher-quality trajectories, which in turn provide stronger learning signals for policy optimization.

\begin{figure}[t]
    \centering
    \includegraphics[width=.85\textwidth]{figs/framework.pdf}
    \caption{Illustration of \name. In the \name framework, during each rollout, the policy model engages in \textcolor[HTML]{FF9933}{active interactions} with collaborators. We then compute the reward and optimize the policy model using losses derived from both internal (on-policy) and external (off-policy) tokens. Additionally, we propose an \textcolor[HTML]{0066CC}{amended importance sampling coefficient and clipping strategy} to mitigate off-policy errors and the vanishing gradient problem for external tokens.}
    \label{fig:method}
    \vspace{-1.5em}
\end{figure}

% \ctrang{How can the policy recognise that the current state exceeds its internal knowledge? And how can it determine which agent to interact?}

Specifically, we provide the policy model with the definitions and descriptions of three functional agents, i.e., collaborators, in the system prompt (see Prompt~\ref{pt:full_prompt}). These agents instantiate the core competencies required for complex reasoning, and the policy model interacts with them through a structured protocol. Detailed prompts are provided in \Cref{app:collaborator_prompt}.
\begin{itemize}[leftmargin=*]
    \vspace{-0.5em}
    \item \textbf{Verify Agent:} Verifies the correctness of intermediate conclusions. The policy model invokes this agent by enclosing the conclusion to be checked within \texttt{<verify>} and \texttt{</verify>} tags, and the agent returns the verification result within \texttt{<result>} and \texttt{</result>} tags.
    \vspace{-0.5em}
    \item \textbf{Knowledge Agent:} Provides knowledge required for reasoning. The policy model invokes this agent by enclosing its query within \texttt{<retrieval>} and \texttt{</retrieval>} tags, and the agent returns the retrieved information within \texttt{<result>} and \texttt{</result>} tags. Refer to ZeroSearch~\citep{abs-2505-04588}, we also involve noisy information in the retrieval process to enhance the robustness of the policy model.
    \vspace{-0.5em}
    \item \textbf{Reasoning Agent:} Solves intermediate sub-tasks encountered during reasoning. The policy model invokes this agent by enclosing the sub-task within \texttt{<reason>} and \texttt{</reason>} tags, and the agent returns the corresponding result within \texttt{<result>} and \texttt{</result>} tags.
    \vspace{-0.5em}
\end{itemize}

The responses from these agents are incorporated into the rollout trajectory as external tokens, producing mixed-policy trajectories that combine the policy model's own reasoning with collaborator-provided responses:
\begin{equation}
    \bm{\tau} =
    \left\{
    \textcolor[RGB]{0,218,0}{\bm{\tau}_{\iota,1}}, \ldots,
    \textcolor[RGB]{255,51,51}{\bm{\tau}_{\epsilon,1}}, \ldots,
    \textcolor[RGB]{0,218,0}{\bm{\tau}_{\iota,t}}, \ldots,
    \textcolor[RGB]{255,51,51}{\bm{\tau}_{\epsilon,|\bm{\tau}_\epsilon|}},
    \ldots,
    \textcolor[RGB]{0,218,0}{\bm{\tau}_{\iota,|\bm{\tau}_\iota|}}
    \right\}
    \sim
    \prod_{i=1}^{|\bm{\tau}_\iota|}
    \pi_{\theta}(\bm{\tau}_{\iota, i} \mid \bm{\tau}_{<i})
    \cdot
    \prod_{j=1}^{|\bm{\tau}_\epsilon|}
    \pi_{\epsilon}(\bm{\tau}_{\epsilon,j} \mid \bm{I}_j),
\end{equation}
where $\textcolor[RGB]{0,218,0}{\bm{\tau}_\iota}$ denotes the sequence of tokens generated by the policy model, $\textcolor[RGB]{255,51,51}{\bm{\tau}_\epsilon}$ denotes the sequence of tokens generated by external collaborators, the subscript indexes each token, and $\bm{I}_j$ denotes the interaction context used to generate the $j$-th external token. 
For simplicity, we merge the three collaborator policies into a unified external distribution $\pi_{\epsilon}$, while $\pi_{\theta}$ denotes the policy model. 
We provide additional discussions in \Cref{app:error_mitigation}.

Importantly, the policy model does not rely on external agents during inference. 
Instead, it reasons independently by leveraging the knowledge and strategies internalized during training. 
This distinguishes our method from conventional multi-agent systems~\citep{abs-2503-09501,abs-2504-15257,abs-2504-16129}: interaction is used only to enhance training, whereas the trained model performs reasoning autonomously at test time.

\subsection{Learning with Active Interaction} \label{sec:learn_interaction}

During interactive rollout, the policy model generates a trajectory $\bm{\tau}$ that consists of both internal tokens $\bm{\tau}_\iota$ generated by the policy model and external tokens $\bm{\tau}_\epsilon$ provided by collaborators. 
Unlike previous methods~\citep{abs-2503-09516,abs-2503-05592,abs-2505-19300}, which typically discard external tokens and optimize only over the internal tokens generated by the policy model, \name incorporates both internal and external tokens into the learning process.

% \begin{equation} \label{eq:standard_clipped_surrogate}
%     \gJ(\theta) \!=\! \mathbb E_{q \sim P,\{\bm \tau_i\} \sim \pi_\theta}\!\Bigg[\!\frac1G\!\sum_{i=1}^G\!\frac1{|\bm \tau_i|}\!\sum_{t=1}^{|\bm \tau_i|}\!\Big\{ \min\!\big(\rho_{i,t}\tilde A_t^i,\,\text{clip}(\rho_{i,t},1\!-\!\epsilon,1\!+\!\epsilon)\,\tilde A_t^i\big)\!-\!\beta\,D_{\text{KL}}\!\big[\pi_\theta\|\pi_{\text{ref}}\big]\Big\} \Bigg],
% \end{equation}
% \ctrang{Is Eq.4 the same as Eq.1? Then should we just refer back to Eq.1?}

The primary challenge lies in learning from tokens generated by external collaborators. 
Standard RL objectives generally assume that all tokens are sampled from the current or previous policy distribution. 
The classical clipped surrogate objective is defined in \Cref{eq:standard_clipped_surrogate}, where the importance sampling coefficient for each token $\bm{\tau}_{i,t}$ is 
$
    \rho_{i,t}
    =
    \frac{
    \pi_\theta \left( \bm{\tau}_{i,t} \mid \bm{\tau}_{i,<t} \right)
    }{
    \pi_{\theta_{\text{old}}} \left( \bm{\tau}_{i,t} \mid \bm{\tau}_{i,<t} \right)
    },
$ 
and $\pi_{\theta_{\text{old}}}$ denotes the policy from the previous iteration. 
Directly applying \Cref{eq:standard_clipped_surrogate} to external tokens $\bm{\tau}_\epsilon$ may introduce off-policy bias due to distributional mismatch between the policy model and external collaborators, potentially destabilizing training~\citep{SchulmanWDRK17,abs-2504-14945}. 
Conventional approaches often avoid this issue by excluding external tokens from the policy loss~\citep{abs-2503-05592,abs-2505-19300}. 
In contrast, \name explicitly incorporates external tokens into policy optimization, enabling the policy model to acquire useful knowledge and reasoning patterns from collaborators.

\paragraph{Amending the Importance Sampling Coefficient for External Tokens.}
To mitigate off-policy errors, we introduce a modified importance sampling coefficient $\tilde{\rho}$ for external tokens. 
% Recall that the clipped surrogate objective can be simplified as
% \begin{equation}
%     \mathcal{J}(\theta)
%     =
%     \mathbb{E}_{\bm{\tau} \sim \pi_{\theta_{\text{old}}}}
%     \left[
%     \frac{1}{|\bm{\tau}|}
%     \sum_{t=1}^{|\bm{\tau}|}
%     \frac{\pi_\theta(\bm{\tau}_{t} \mid \bm{\tau}_{<t})}
%     {\pi_{\theta_{\text{old}}}(\bm{\tau}_{t} \mid \bm{\tau}_{<t})}
%     \tilde{A}_t
%     \right],
% \end{equation}
% where clipping and KL-penalty terms are omitted for brevity. 
Since external tokens are sampled from the collaborator distribution $\pi_{\epsilon}$ rather than $\pi_{\theta_{\text{old}}}$, the objective can be decomposed as
\begin{equation}
    \mathcal{J}(\theta)
    =
    \mathbb{E}_{\pi_{\theta_{\text{old}}}}
    \left[
    \frac{1}{|\bm{\tau}_\iota|}
    \sum_{\bm{\tau}_{t} \in \bm{\tau}_\iota}
    \frac{\pi_\theta(\bm{\tau}_{t} \mid \bm{\tau}_{<t})}
    {\pi_{\theta_{\text{old}}}(\bm{\tau}_{t} \mid \bm{\tau}_{<t})}
    \tilde{A}_t
    \right]
    +
    \mathbb{E}_{\pi_{\epsilon}}
    \left[
    \frac{1}{|\bm{\tau}_\epsilon|}
    \sum_{\bm{\tau}_{t} \in \bm{\tau}_\epsilon}
    \frac{\pi_\theta(\bm{\tau}_{t} \mid \bm{\tau}_{<t})}
    {\pi_{\epsilon}(\bm{\tau}_{t} \mid \bm{I}_t)}
    \tilde{A}_t
    \right],
\end{equation} 
where clipping and KL-penalty terms are omitted for brevity. 
Although directly using the collaborator distribution in the denominator is theoretically natural, it poses two practical challenges: 
\ding{182} vocabulary discrepancies between the collaborator and the policy model may lead to incompatible token-level probability estimates; and 
\ding{183} evaluating the collaborator distribution incurs substantial computational overhead. 
To address these limitations, we approximate the collaborator distribution as a one-hot distribution over the sampled external token and place it under the policy distribution, yielding the modified objective:
\begin{equation} \label{eq:modified_importance_sampling}
    \mathcal{J}^\prime(\theta)
    =
    \mathbb{E}_{\pi_{\theta_{\text{old}}}}
    \left[
    \frac{1}{|\bm{\tau}_\iota|}
    \sum_{\bm{\tau}_{t} \in \bm{\tau}_\iota}
    \frac{\pi_\theta(\bm{\tau}_{t} \mid \bm{\tau}_{<t})}
    {\pi_{\theta_{\text{old}}}(\bm{\tau}_{t} \mid \bm{\tau}_{<t})}
    \tilde{A}_t
    \right]
    +
    \mathbb{E}_{\pi_{\theta_{\text{old}}}}
    \left[
    \frac{1}{|\bm{\tau}_\epsilon|}
    \sum_{\bm{\tau}_{t} \in \bm{\tau}_\epsilon}
    \pi_\theta(\bm{\tau}_{t} \mid \bm{\tau}_{<t})
    \tilde{A}_t
    \right].
\end{equation}
This approximation can be interpreted as assigning all probability mass to the observed collaborator token, avoiding explicit access to collaborator logits, resolving vocabulary incompatibility, and reducing computational cost while still encouraging the policy model to increase the likelihood of high-advantage external tokens. 
We provide further approximation error analysis in \Cref{app:error_analysis}.

% \lh{Can we add some theoretical analysis to justify this approximation? Maybe we can draw the connection to the offline RL setting, where the data distribution is fixed and the policy is trained to fit it.}

\paragraph{Gradient Vanishing for External Tokens.}
The standard surrogate objective uses clipping to prevent excessive policy deviation from the previous policy. 
For external tokens, however, the modified coefficient $\pi_\theta(\bm{\tau}_t \mid \bm{\tau}_{<t})$ is naturally bounded by the softmax output. 
Thus, applying the original clipping mechanism is unnecessary. 
Without additional treatment, the gradient contribution of an external token is proportional to 
$
    \pi_\theta(\bm{\tau}_t \mid \bm{\tau}_{<t})
    \cdot
    \tilde{A}_t
    \cdot
    \nabla_\theta \log \pi_\theta(\bm{\tau}_t \mid \bm{\tau}_{<t}).
$ 
When an external token has low probability under the policy model, i.e., $\pi_\theta(\bm{\tau}_t \mid \bm{\tau}_{<t}) \to 0$, its gradient contribution also approaches zero. 
This vanishing gradient problem is especially severe for informative external tokens, which are often assigned low probability precisely because they lie beyond the policy model's current capability boundary. 
As a result, the model may fail to effectively internalize useful knowledge from collaborators.

\paragraph{Clipping Strategy for External Tokens.}
To alleviate the vanishing gradient problem, we introduce a lower-bound clipping strategy for external tokens:
\begin{equation} \label{eq:clipping_strategy}
    \text{clip}
    \left(
    \pi_\theta,
    \frac{\omega}{\text{sg} \left( \pi_\theta \right)}
    \cdot
    \pi_\theta,
    \infty
    \right),
\end{equation}
where $\omega$ is a clipping hyperparameter and $\text{sg}(\cdot)$ denotes the stop-gradient operation. This formulation sets a lower bound on the coefficient for external tokens, while the term $\pi_\theta / \text{sg}(\pi_\theta)$ preserves numerical equivalence in the forward pass. 
The resulting gradients are:
\begin{equation}
    \left\{
    \begin{aligned}
        & \pi_\theta \cdot \tilde A_t \cdot \nabla_\theta \log \pi_\theta, 
        && \text{if } \pi_\theta \ge \omega, \\
        & \omega \cdot \tilde A_t \cdot \nabla_\theta \log \pi_\theta, 
        && \text{if } 0 \le \pi_\theta < \omega.
    \end{aligned}
    \right.
\end{equation}
This strategy ensures that low-probability external tokens with positive learning signals still receive non-vanishing gradients, thereby improving knowledge transfer from external collaborators and mitigating the effect of large policy-collaborator discrepancies.

\paragraph{Final Objective.}
By integrating the amended importance sampling coefficient and the proposed clipping strategy, we obtain the final objective of \name:
\begin{equation}
    \begin{aligned}
        \widehat{\gJ}(\theta)
        = &\;
        \mathbb E_{\pi_{\theta_{\text{old}}}}
        \left[
        \frac{1}{|\bm{\tau}_\iota|}
        \sum_{\bm{\tau}_{t} \in \bm{\tau}_\iota}
        \min
        \left\{
        \frac{\pi_\theta^t}{\pi_{\theta_{\text{old}}}^t}
        \tilde A_t,
        \text{clip}
        \left(
        \frac{\pi_\theta^t}{\pi_{\theta_{\text{old}}}^t},
        1 - \epsilon,
        1 + \epsilon
        \right)
        \tilde A_t
        \right\}
        \right] \\
        &+
        \mathbb E_{\pi_{\theta_{\text{old}}}}
        \left[
        \frac{1}{|\bm{\tau}_\epsilon|}
        \sum_{\bm{\tau}_{t} \in \bm{\tau}_\epsilon}
        % \min
        % \left\{
        % \pi_\theta^t \tilde A_t,
        \text{clip}
        \left(
        \pi_\theta^t,
        \frac{\omega}{\text{sg} \left( \pi_\theta^t \right)}
        \cdot
        \pi_\theta^t,
        \infty
        \right)
        \tilde A_t
        % \right\}
        \right],
    \end{aligned}
\end{equation}
where $\pi_\theta^t$ denotes $\pi_\theta(\bm{\tau}_{t} \mid \bm{\tau}_{<t})$ and $\pi_{\theta_{\text{old}}}^t$ denotes $\pi_{\theta_{\text{old}}}(\bm{\tau}_{t} \mid \bm{\tau}_{<t})$. The first term optimizes internal on-policy tokens using the standard clipped surrogate objective, while the second term enables stable learning from external off-policy tokens through the amended coefficient and lower-bound clipping strategy.

\section{Experiments}

\subsection{Setup} \label{sec:exp-settings}

\vspace{-0.2em}
\paragraph{Baselines.}
We compare \name against several representative LLM post-training methods: 
\ding{182} Supervised Fine-Tuning (SFT): This method fine-tunes the model using synthetic data generated via rejection sampling. 
\ding{183} On-Policy Distillation: These methods distill knowledge from a stronger teacher model while preserving on-policy trajectory generation. 
Specifically, trajectories are sampled from the policy model, whereas the supervision signal is derived from the logits of the teacher model. 
We adopt OPSD~\citep{abs-2601-18734} as a representative work.
\ding{184} Reinforcement Learning (RL): These methods train the model using algorithms such as GRPO \citep{abs-2402-03300}, PRIME \citep{abs-2502-01456}, and Dr.GRPO~\citep{abs-2503-20783}. 
\ding{185} Off-Policy Enhanced Reinforcement Learning: Represented by LUFFY \citep{abs-2505-15612}, these methods enhance RL exploration by utilizing trajectories generated by strong models as guidance.

\vspace{-0.2em}
\paragraph{Evaluation Benchmarks.}
We evaluate all models across four domain-specific benchmarks: \ding{182} \textit{Mathematical Reasoning}: Includes AIME24, AIME25, MATH500~\citep{HendrycksBKABTS21}, and LiveMathBench~\citep{abs-2412-13147}; \ding{183} \textit{Scientific Reasoning}: Represented by GPQA-Diamond~\citep{abs-2311-12022}; \ding{184} \textit{Code Reasoning}: Comprises MBPP~\citep{abs-2108-07732} and LiveCodeBench~\citep{JainHGLYZWSSS25}; \ding{185} \textit{Puzzle Reasoning}: Includes puzzles from Reasoning-Gym~\citep{abs-2505-24760}. 

\vspace{-0.2em}
\paragraph{Implementation Details.} 
We conduct experiments using Qwen2.5-7B-Instruct~\citep{abs-2412-15115} and Llama-3.2-3B-Instruct~\citep{abs-2407-21783}. 
For the external collaborators, we employ Qwen2.5-7B-Instruct, Llama-3.2-3B-Instruct, and the more powerful Qwen3-30B-A3B-Instruct-2507~\citep{abs-2505-09388}. 
The training corpus, drawn from DAPO~\citep{abs-2503-14476} and OpenScienceReasoning-2~\footnote{\url{https://huggingface.co/datasets/nvidia/OpenScienceReasoning-2}}, consists of approximately 35,000 high-quality reasoning-intensive samples. 
Models are trained for 200 steps with a batch size of 256, a group size of 8, and the full training parameters are provided in \Cref{app:training_details}. 
During each generation, the LLM is allowed up to three interactions with the external collaborator. Training is performed using the veRL~\citep{ShengZYWZZPL025} and vLLM~\citep{KwonLZ0ZY0ZS23} frameworks. 
For evaluation, we set the sampling temperature to 1.0, top-$p$ to 1.0, and the maximum number of generated tokens to 16,384. 
To reduce variance, we report the average performance of each benchmark across multiple runs. 
The prompt used during inference is presented in Prompt~\ref{pt:math_prompt}.

\subsection{Main Results and Analysis}
\Cref{tab:main_performance} illustrates the performance of \name and baselines on different benchmarks, containing different external policy LLMs. 
We summarize the main findings as follows.

\begin{table}[t]
    \centering
    \caption{Experimental results of \name and baselines with Qwen and Llama, where LUFFY and \name are all based on GRPO. We report the average performance for 16 runs on AIME24 and AIME25, and 4 runs on the others, as well as the improvement of \name over LUFFY. We abbreviate LMB as LiveMathBench v202505, LCB as LiveCodeBench v6, and RG as Reasoning Gym. \(\spadesuit\) denotes the in-domain evaluation benchmark and \(\clubsuit\) denotes the out-of-domain benchmark. The RL performance of Llama is provided in \Cref{app:llama3_rl_performance}.} \label{tab:main_performance}
    % \vspace{1.3em}
    \resizebox{0.95\textwidth}{!}{
        \begin{tabular}{lccccccccc}
            \toprule
            \multirow{3}{*}{\bf Methods} & \multicolumn{4}{c}{\bf Math \(\spadesuit\)}  & \multicolumn{1}{c}{\bf Science \(\spadesuit\)}  & \multicolumn{2}{c}{\bf Code \(\clubsuit\)}  & \multicolumn{1}{c}{\bf Puzzle \(\clubsuit\)}  \\
            \cmidrule(r){2-5} \cmidrule(r){6-6} \cmidrule(r){7-8} \cmidrule(r){9-9}
            & \bf AIME24 & \bf AIME25 & \bf MATH500 & \bf LMB & \bf GPQA-D & \bf MBPP & \bf LCB & \bf RG \\
            & \bf Avg@\(16\) & \bf Avg@\(16\) & \bf Avg@\(4\) & \bf Avg@\(4\) & \bf Avg@\(4\) & \bf Avg@\(4\) & \bf Avg@\(4\) & \bf Avg@\(4\) \\
            \midrule
            \rowcolor{lightgray!30} \multicolumn{10}{c}{\textit{Qwen2.5-7B-Instruct}} \\
            \midrule
            Original & 9.8 & 7.5 & 73.0 & 10.8 & 33.3 & 58.7 & 15.7 & 9.6 \\
            GRPO & 23.3 & 18.9 & 78.4 & 13.9 & 38.4 & 61.3 & 18.1 & 14.5 \\
            PRIME & 22.2 & 18.3 & 76.5 & 11.5 & 35.8 & 57.7 & 16.3 & 12.2 \\
            Dr.GRPO & 23.7 & 19.2 & 78.8 & 14.0 & 38.6 & 61.5 & 18.4 & 15.0 \\
            \midrule
            \rowcolor{lightgray!30} \multicolumn{10}{c}{\textit{Qwen2.5-7B-Instruct} \; $\leftrightarrow$ \; \textit{Qwen2.5-7B-Instruct}} \\
            \midrule
            SFT & 20.0 & 17.1 & 73.4 & 11.1 & 33.6 & 59.0 & 15.9 & 9.8 \\
            OPSD & 22.8 & 18.7 & 77.0 & 12.8 & 38.5 & 60.1 & 17.0 & 12.4 \\
            LUFFY & 23.8 & 18.4 & 76.8 & 13.2 & 39.1 & 60.7 & 17.3 & 13.9 \\
            \rowcolor{cyan!10} \name & \bf 26.5 & \bf 21.3 & \bf 80.5 & \bf 14.9 & \bf 41.7 & \bf 62.7 & \bf 19.2 & \bf 16.0 \\
            \midrule
            \rowcolor{lightgray!30} \multicolumn{10}{c}{\textit{Qwen2.5-7B-Instruct} \; $\leftrightarrow$ \; \textit{Qwen3-30B-A3B-Instruct-2507}} \\
            \midrule
            SFT & 22.4 & 18.8 & 76.0 & 12.5 & 34.3 & 59.5 & 16.4 & 10.5 \\
            OPSD & 25.8 & 20.5 & 79.2 & 14.2 & 40.4 & 61.3 & 18.1 & 14.7 \\
            LUFFY & 26.7 & 21.2 & 80.9 & 15.1 & 41.8 & 62.6 & 19.2 & 15.7 \\
            \rowcolor{cyan!10} \name & \bf 28.7 & \bf 22.4 & \bf 82.3 & \bf 17.5 & \bf 42.9 & \bf 63.9 & \bf 21.1 & \bf 17.8 \\
            \midrule
            \rowcolor{lightgray!30} \multicolumn{10}{c}{\textit{Llama3.2-3B-Instruct} \; $\leftrightarrow$ \; \textit{Llama3.2-3B-Instruct}} \\
            \midrule
            SFT & 10.6 & 8.6 & 61.1 & 5.1 & 34.0 & 38.6 & 6.2 & 4.1 \\
            OPSD & 14.5 & 9.8 & 63.2 & 7.0 & 35.4 & 39.8 & 7.5 & 4.6 \\
            LUFFY & 13.6 & 8.9 & 62.6 & 4.8 & 34.7 & 39.1 & 8.7 & 4.9 \\
            \rowcolor{cyan!10} \name & \bf 17.9 & \bf 11.9 & \bf 67.7 & \bf 10.8 & \bf 36.4 & \bf 43.0 & \bf 10.1 & \bf 11.0 \\
            \midrule
            \rowcolor{lightgray!30} \multicolumn{10}{c}{\textit{Llama3.2-3B-Instruct} \; $\leftrightarrow$ \; \textit{Qwen3-30B-A3B-Instruct-2507}} \\
            \midrule
            SFT & 12.4 & 10.5 & 62.9 & 7.0 & 35.8 & 40.4 & 8.0 & 6.0 \\
            OPSD & 17.2 & 11.8 & 67.0 & 9.5 & 35.7 & 43.5 & 9.6 & 10.4 \\
            LUFFY & 18.7 & 12.9 & 67.5 & 10.8 & 36.7 & 44.2 & 10.8 & 11.9 \\
            \rowcolor{cyan!10} \name & \bf 20.1 & \bf 14.4 & \bf 69.9 & \bf 13.3 & \bf 38.9 & \bf 45.4 & \bf 12.4 & \bf 13.3 \\
            \bottomrule
        \end{tabular}
    }
    \vspace{-1.5em}
\end{table} 

\vspace{-0.5em}
\paragraph{\name Outperforms Salient Baselines.}
As shown in \Cref{tab:main_performance}, \name achieves superior performance over strong RLVR baselines, demonstrating the effectiveness of our approach in enhancing reasoning capabilities. 
Compared with standard SFT, OPSD, and LUFFY, \name obtains consistent improvements across all benchmarks. 
These results support our claim that active interactions with external collaborators can substantially improve policy exploration and thereby lead to better reasoning performance.

\vspace{-0.5em}
\paragraph{\name Generalizes to Different Policy Models.}
The benefits of \name are not tied to a specific model architecture, but generalize well across different foundation models. When applied to both Qwen2.5-7B-Instruct and Llama3.2-3B-Instruct, \name yields consistent and substantial improvements over the LUFFY baseline. 
% For example, applying \name to Qwen2.5-7B-Instruct with collaborators instantiated from the same backbone achieves absolute gains of up to 3.7\% on MATH500 and 6.0\% on GPQA. 
Additionally, in \Cref{app:larger_models} and \Cref{app:long_cot_models}, we further demonstrate that \name remains effective when applied to larger policy models and long-CoT models, confirming the generalizability of our approach across different model scales and behaviors. 
Notably, even when the policy model and collaborators share the same backbone, \name still outperforms standard RL algorithms. 
This indicates that the gains of \name do not simply come from using a stronger external model, but are primarily driven by enhanced exploration through active interaction. 

\vspace{-0.5em}
\paragraph{\name Generalizes to Different RLVR Algorithms.}
To evaluate the generalization capability of \name across different RLVR algorithms, we further implement \name with both DAPO~\citep{abs-2503-14476} and GSPO~\citep{abs-2507-18071}. 
The results, presented in \Cref{app:rlvr_algorithms}, show that \name delivers consistent performance improvements regardless of the underlying RLVR algorithm, thereby demonstrating its strong generalizability. 

\vspace{-0.5em}
\paragraph{\name Generalizes to Out-of-Domain Benchmarks.}
Beyond standard mathematical and scientific reasoning tasks, \name exhibits strong generalization across distinct domains. The results show consistent improvements over baselines on code generation benchmarks, including MBPP and LCB, as well as complex puzzle tasks in RG. This indicates that the enhanced exploration enabled by \name transfers beyond the training distribution and benefits broader reasoning scenarios. 

\vspace{-0.5em}
\paragraph{\name Scales with Collaborator Capability.}
\name is able to effectively leverage stronger collaborators. When paired with the substantially stronger Qwen3-30B-A3B-Instruct-2507 instead of same-scale counterparts, the overall performance improves markedly. 
For example, upgrading the collaborator for Qwen2.5-7B increases the AIME24 score from 26.5 to 28.7 and the GPQA-D score from 41.7 to 42.9. These results show that \name scales with collaborator capability, enabling the policy model to access more informative guidance and unlock stronger reasoning potential.

\subsection{Ablation Study}

\begin{table}[t]
    \centering
    \caption{Ablation study of \name's components w.r.t. the modified importance sampling coefficient~(abbreviated as IS, \Cref{eq:modified_importance_sampling}) and clipping strategy~(abbreviated as CS, \Cref{eq:clipping_strategy}). We also include the results where we masked the external tokens from external policy models.} \label{tab:ablation}
    % \vspace{0.4em}
    \resizebox{1.0\textwidth}{!}{
        \begin{tabular}{lccccccccc}
            \toprule
            \multirow{3}{*}{\bf Methods} & \multicolumn{4}{c}{\bf Math} & \multicolumn{1}{c}{\bf Science} & \multicolumn{2}{c}{\bf Code} & \multicolumn{1}{c}{\bf Puzzle} \\
            \cmidrule(r){2-5} \cmidrule(r){6-6} \cmidrule(r){7-8} \cmidrule(r){9-9}
            & \bf AIME24 & \bf AIME25 & \bf MATH500 & \bf LMB & \bf GPQA-D & \bf MBPP & \bf LCB & \bf RG \\
            & \bf Avg@\(16\) & \bf Avg@\(16\) & \bf Avg@\(4\) & \bf Avg@\(4\) & \bf Avg@\(4\) & \bf Avg@\(4\) & \bf Avg@\(4\) & \bf Avg@\(4\) \\
            \midrule
            \multicolumn{10}{c}{\textit{Qwen2.5-7B-Instruct} \; $\leftrightarrow$ \; \textit{Qwen2.5-7B-Instruct}} \\
            \midrule
            \rowcolor{cyan!10} \name & \bf 26.5 & \bf 21.3 & \bf 80.5 & \bf 14.9 & \bf 41.7 & \bf 62.7 & \bf 19.2 & \bf 16.0 \\
            \; w/o IS & 25.6\drop{0.9} & 20.5\drop{0.8} & 77.5\drop{3.0} & 13.7\drop{1.2} & 40.0\drop{1.7} & 61.6\drop{1.1} & 16.5\drop{2.7} & 15.1\drop{0.9} \\
            \; w/o CS & 25.5\drop{1.0} & 19.8\drop{1.5} & 76.9\drop{3.6} & 10.6\drop{4.3} & 38.6\drop{3.1} & 60.3\drop{2.4} & 15.4\drop{3.8} & 14.2\drop{1.8} \\
            \; Mask   & 25.0\drop{1.5} & 19.4\drop{1.9} & 79.7\drop{0.8} & 11.9\drop{3.0} & 37.8\drop{3.9} & 59.7\drop{3.0} & 15.1\drop{4.1} & 13.4\drop{2.6} \\
            \bottomrule
        \end{tabular}
    }
\end{table}

\vspace{-0.2em}
\paragraph{Impact of the Modified Importance Sampling Coefficient in \name.} 
To evaluate the necessity of the modified importance sampling coefficient for external tokens (\Cref{eq:modified_importance_sampling}), we compare it with the standard coefficient used in vanilla reinforcement learning, where the probability ratio of external tokens is computed directly under the policy model. 
The results in \Cref{tab:ablation} show that the modified coefficient consistently outperforms the vanilla counterpart. 
This improvement suggests that the modified coefficient better captures the distributional discrepancy between the policy model and external collaborators as an effective approximation, thereby mitigating off-policy bias more effectively.
% \lh{You need to emphaize why the modified coefficient is better. For example, it can correct the off-policy bias more effectively}

\vspace{-0.5em}
\paragraph{Impact of the Clipping Strategy in \name.} 
We also evaluate the clipping strategy introduced in \Cref{eq:clipping_strategy}. 
As shown in \Cref{tab:ablation}, removing this strategy leads to a substantial performance drop across all benchmarks. 
This finding indicates that the clipping strategy effectively prevents gradient vanishing and thereby enhances overall model performance.

\vspace{-0.5em}
\paragraph{Impact of Different Collaborators in \name.} 
As shown in \Cref{fig:ablation}, we perform an ablation study to assess the contributions of different collaborators in \name, where we only allow the policy model to interact with one type of collaborator. 
The results reveal that, for knowledge-intensive benchmarks such as science, the Knowledge Agent contributes most significantly, whereas the Verify Agent and Reasoning Agent prove more effective on other benchmarks. Nevertheless, performance with any single collaborator remains superior to the baseline (marked in green).

\subsection{Does \name Transcend the Inherent Boundary of LLMs?} 

\begin{figure}[t]
    \centering
    \begin{minipage}[b]{0.52\textwidth}
        \centering
        \includegraphics[width=\textwidth]{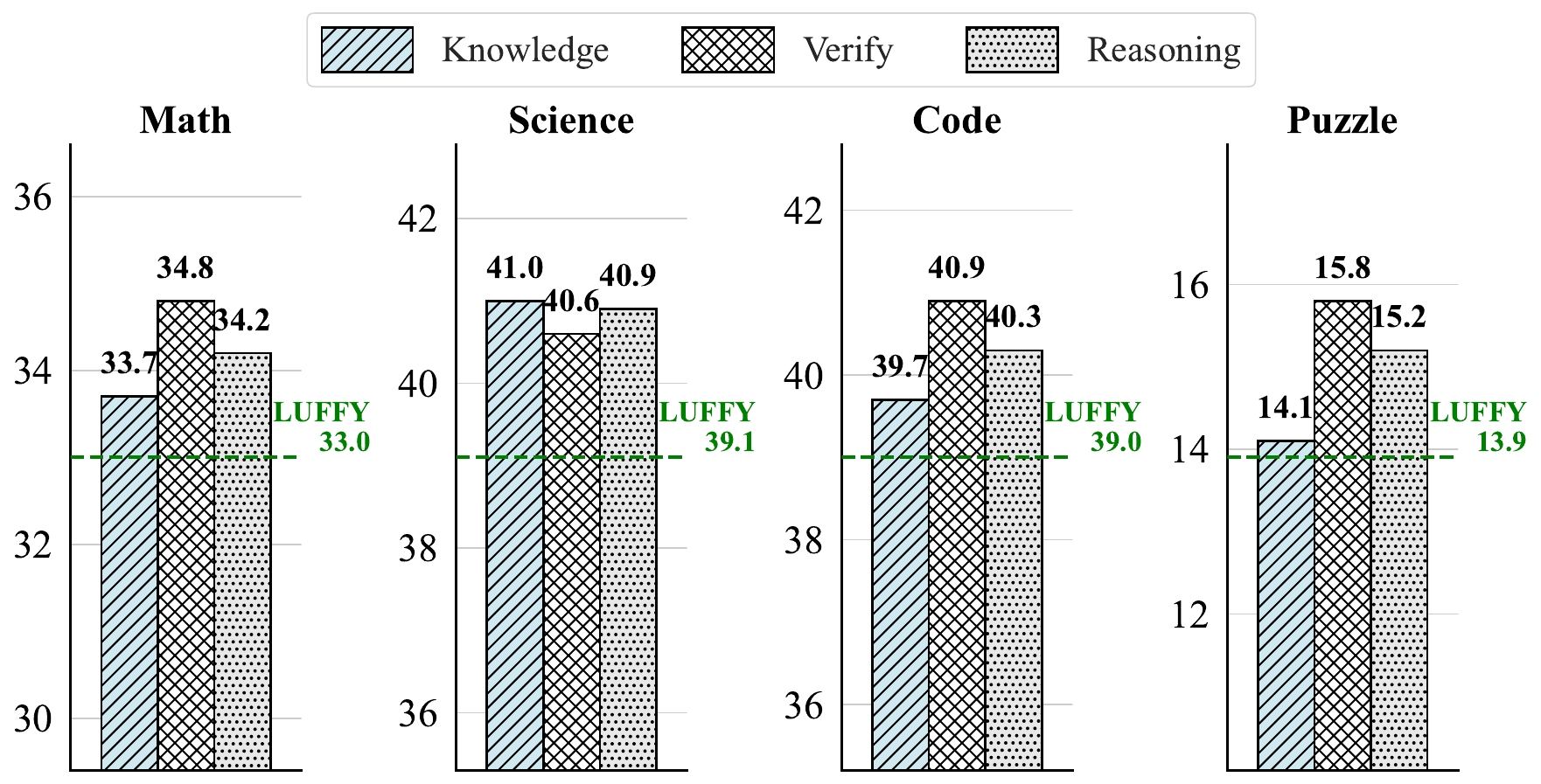}
        \caption{Ablation Study of the collaborators in \name. Each bar indicates the average performance of all benchmarks in this domain.} 
        \label{fig:ablation}
    \end{minipage}
    \hfill
    \begin{minipage}[b]{0.44\textwidth}
        \centering
        \includegraphics[width=\textwidth]{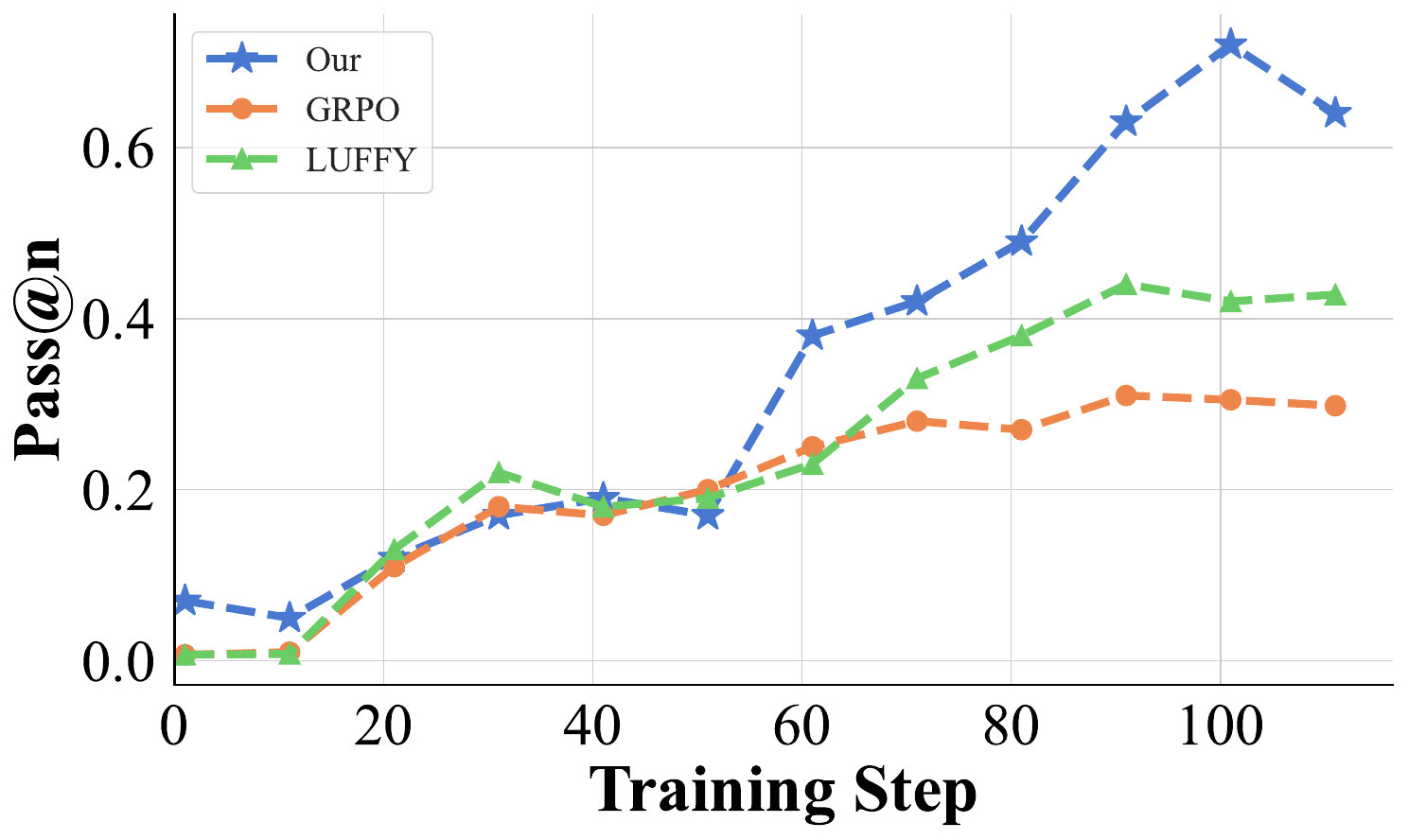}
        \caption{Capability Boundary Expansion of \name measured by the Pass@$n$ metric. \\~} 
        \label{fig:knowledge_boundary}
    \end{minipage}
    \vspace{-1.5em}
\end{figure}

To validate and elucidate the learning outcomes of \name, we analyze its training dynamics and the expansion of the knowledge boundary of the LLM. 

\vspace{-0.2em}
\paragraph{Capability Boundary Expansion of \name-Trained LLM.}  
To validate whether \name can expand the inherent knowledge boundary of LLMs, we select questions from DeepMath~\citep{abs-2504-11456} and track the Pass@$4$ metric throughout the training process. 
As shown in \Cref{fig:knowledge_boundary}, \name achieves a continual improvement in the Pass@$4$ metric, while conventional RLVR easily reach plateau. This indicates \name's ability to transcend the inherent knowledge boundary of LLMs, enabling it to solve previously unsolvable problems and acquire new reasoning capabilities.

\vspace{-0.5em}
\paragraph{Analysis of Training Dynamics.}
\Cref{fig:training_dynamics} illustrates the evolution of key metrics during the training of \name and the baselines. 
First, training with \name substantially reduces the proportion of tasks in a batch that the policy model fails to solve across all rollouts (\textit{Batch Failed Tasks}), indicating that the framework enables the model to overcome its inherent knowledge limitations and solve previously intractable tasks. 
This improvement is also evident in the training batch accuracy, where \name-trained models achieve markedly higher gains. 
Additionally, we analyze the number of interactions initiated by the policy model per batch (\textit{Batch Interactions}). 
Under \name, the interaction frequency initially rises, then declines, and eventually stabilizes. 
This pattern suggests that the policy model queries external collaborators frequently in the early stages of training because of its limited initial capability. As the model's internal knowledge boundary expands, it increasingly solves problems independently. 
Furthermore, when external tokens are masked during training, the results in \Cref{tab:ablation} and \Cref{fig:training_dynamics} show that the model merely exploits information provided by the external policy model rather than integrating it into its own parameter space. This lack of integration manifests as stagnated interaction frequency, which ultimately limits overall performance.

\begin{figure}[t]
    \centering
    \includegraphics[width=.9\textwidth]{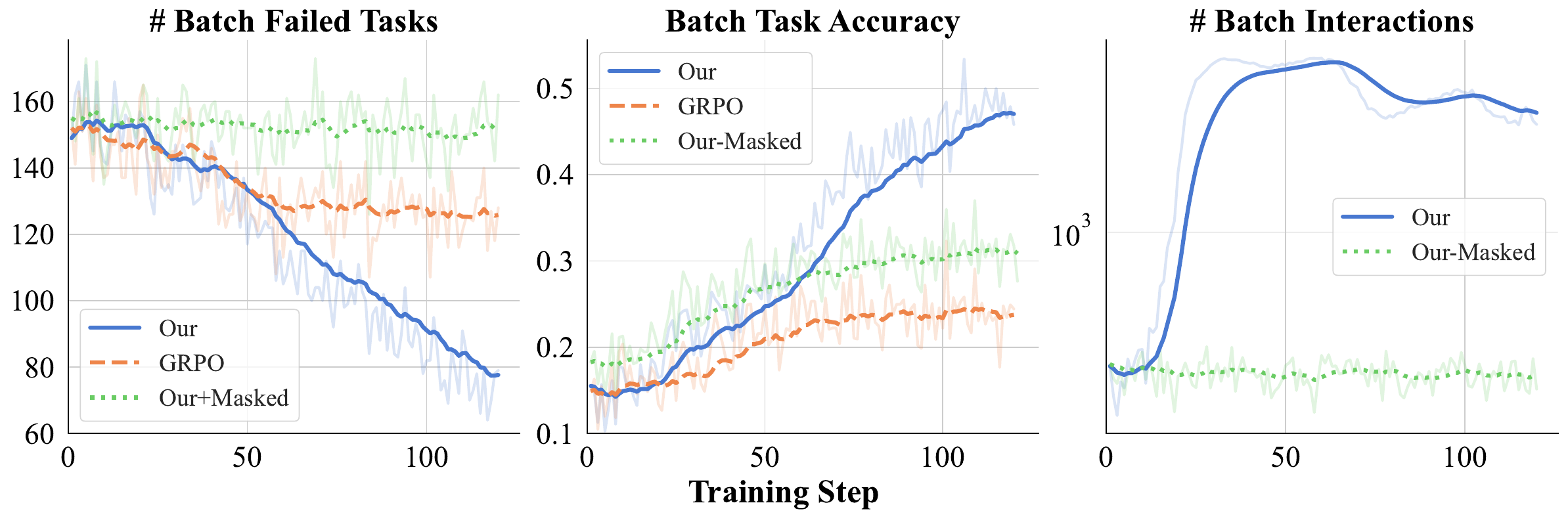}
    \caption{Training Dynamics of \name and baselines on Qwen2.5-7B-Instruct with the same model as collaborators.} \label{fig:training_dynamics} 
    \vspace{-1.5em}
\end{figure} 

\section{Related Work}

\vspace{-0.2em}
\paragraph{Reinforcement Learning for LLM Reasoning.}
Reinforcement learning has emerged as a powerful paradigm for enhancing the reasoning capabilities of LLMs. 
A prominent line of work is Reinforcement Learning with Verifiable Rewards (RLVR)~\citep{openai2024o1,openai2024o3,openai2025gpt5,abs-2501-12948,abs-2501-12599,qwq32b,abs-2505-09388,abs-2507-06261,bytedance2025seed1_6}, which leverages outcome-verifiable signals to optimize reasoning behaviors. Many RLVR-based approaches are closely associated with test-time scaling, where models iteratively refine their reasoning by revisiting intermediate thoughts, exploring alternative strategies, and performing self-correction, often guided by chain-of-thought prompting. 
These training and inference paradigms have led to long-CoT LLMs that achieve substantial gains on challenging reasoning tasks. 
More recent studies further improve RLVR by refining the underlying optimization algorithms. 
For example, Dr.GRPO~\citep{abs-2503-20783}, VAPO~\citep{abs-2504-05118}, and DAPO~\citep{abs-2503-14476} introduce algorithmic adaptations in sampling strategies, reward assignment, and advantage estimation to further enhance LLM reasoning performance.

\vspace{-0.5em}
\paragraph{Off-Policy Enhanced Reinforcement Learning.}
Recent studies~\citep{abs-2504-07912,abs-2504-13837} suggest that purely on-policy learning remains fundamentally constrained by the vast exploration space, often reinforcing existing behaviors rather than discovering genuinely new reasoning strategies. 
In other words, most current approaches optimize reasoning within the model's existing capability boundary rather than expanding it. 
To overcome this limitation, prior work incorporates external demonstrations from stronger teacher models to extend the policy model's capability boundary~\citep{abs-2504-14945,abs-2509-04419,abs-2508-11408,abs-2509-06948,abs-2506-05316}, which can be expert trajectories, critiques~\citep{ShinnCGNY23,abs-2509-26306}, or logits~\citep{abs-2601-18734}. 
These methods typically combine expert demonstrations with on-policy updates through off-policy policy gradient~\citep{abs-2504-14945}, SFT loss~\citep{abs-2508-11408,abs-2506-07527,abs-2506-19767}, knowledge distillation~\citep{abs-2506-02208}, or reinforcement learning~\citep{abs-2509-26306}. 
However, they generally rely on high-level guidance, which is costly to sample, information-sparse, and confines exploration to a static expert-generated distribution. 
More detailed discussions are provided in \Cref{app:related_work}.
% In contrast, our approach introduces active interactive exploration, enabling the policy model to acquire fine-grained guidance during rollout and expand its reasoning capability boundary more effectively through reinforcement learning.

\section{Conclusion}

In this paper, we introduce \textbf{\name}, an enhanced RLVR framework that improves LLM reasoning by expanding exploration through active interaction. 
Specifically, \name enables the policy model to proactively consult functional collaborative agents, thereby receiving fine-grained and targeted guidance to expand its capability boundary during training. 
To address the off-policy bias and gradient vanishing issues that arise when learning from external tokens, we further introduce a tailored importance sampling coefficient and clipping strategy that can be seamlessly integrated into the reinforcement learning objective. 
Extensive experiments demonstrate that \name achieves consistent improvements on both in-domain and out-of-domain reasoning tasks.

\clearpage
\bibliography{refs}

@misc{openai2024o1,
  author       = {{OpenAI}},
  title        = {Learning to reason with llms},
  howpublished = {\url{https://openai.com/index/learning-to-reason-with-llms/}},
  note         = {Accessed: 2024-09},
  year         = {2024}
}

@misc{openai2024o3,
  author       = {{OpenAI}},
  title        = {Introducing OpenAI o3 and o4-mini},
  howpublished = {\url{https://openai.com/index/introducing-o3-and-o4-mini/}},
  note         = {Accessed: 2024-12},
  year         = {2024}
}

@misc{openai2025gpt5,
  author       = {{OpenAI}},
  title        = {GPT-5 and the new era of work},
  howpublished = {\url{https://openai.com/index/gpt-5-new-era-of-work/}},
  note         = {Accessed: 2025-08},
  year         = {2025}
}

@misc{xai2025grok4,
  author       = {{xAI}},
  title        = {Grok 4},
  howpublished = {\url{https://x.ai/news/grok-4/}},
  note         = {Accessed: 2025-07},
  year         = {2025}
}

@article{abs-2501-12948,
  author       = {DeepSeek{-}AI and
                  Daya Guo and
                  Dejian Yang and
                  Haowei Zhang and
                  Junxiao Song and
                  Ruoyu Zhang and
                  Runxin Xu and
                  Qihao Zhu and
                  Shirong Ma and
                  Peiyi Wang and
                  Xiao Bi and
                  Xiaokang Zhang and
                  Xingkai Yu and
                  Yu Wu and
                  Z. F. Wu and
                  Zhibin Gou and
                  Zhihong Shao and
                  Zhuoshu Li and
                  Ziyi Gao and
                  Aixin Liu and
                  Bing Xue and
                  Bingxuan Wang and
                  Bochao Wu and
                  Bei Feng and
                  Chengda Lu and
                  Chenggang Zhao and
                  Chengqi Deng and
                  Chenyu Zhang and
                  Chong Ruan and
                  Damai Dai and
                  Deli Chen and
                  Dongjie Ji and
                  Erhang Li and
                  Fangyun Lin and
                  Fucong Dai and
                  Fuli Luo and
                  Guangbo Hao and
                  Guanting Chen and
                  Guowei Li and
                  H. Zhang and
                  Han Bao and
                  Hanwei Xu and
                  Haocheng Wang and
                  Honghui Ding and
                  Huajian Xin and
                  Huazuo Gao and
                  Hui Qu and
                  Hui Li and
                  Jianzhong Guo and
                  Jiashi Li and
                  Jiawei Wang and
                  Jingchang Chen and
                  Jingyang Yuan and
                  Junjie Qiu and
                  Junlong Li and
                  J. L. Cai and
                  Jiaqi Ni and
                  Jian Liang and
                  Jin Chen and
                  Kai Dong and
                  Kai Hu and
                  Kaige Gao and
                  Kang Guan and
                  Kexin Huang and
                  Kuai Yu and
                  Lean Wang and
                  Lecong Zhang and
                  Liang Zhao and
                  Litong Wang and
                  Liyue Zhang and
                  Lei Xu and
                  Leyi Xia and
                  Mingchuan Zhang and
                  Minghua Zhang and
                  Minghui Tang and
                  Meng Li and
                  Miaojun Wang and
                  Mingming Li and
                  Ning Tian and
                  Panpan Huang and
                  Peng Zhang and
                  Qiancheng Wang and
                  Qinyu Chen and
                  Qiushi Du and
                  Ruiqi Ge and
                  Ruisong Zhang and
                  Ruizhe Pan and
                  Runji Wang and
                  R. J. Chen and
                  R. L. Jin and
                  Ruyi Chen and
                  Shanghao Lu and
                  Shangyan Zhou and
                  Shanhuang Chen and
                  Shengfeng Ye and
                  Shiyu Wang and
                  Shuiping Yu and
                  Shunfeng Zhou and
                  Shuting Pan and
                  S. S. Li},
  title        = {DeepSeek-R1: Incentivizing Reasoning Capability in LLMs via Reinforcement
                  Learning},
  journal      = {CoRR},
  volume       = {abs/2501.12948},
  year         = {2025}
}

@article{abs-2501-12599,
  author       = {Kimi-Team and
                  Angang Du and
                  Bofei Gao and
                  Bowei Xing and
                  Changjiu Jiang and
                  Cheng Chen and
                  Cheng Li and
                  Chenjun Xiao and
                  Chenzhuang Du and
                  Chonghua Liao and
                  Chuning Tang and
                  Congcong Wang and
                  Dehao Zhang and
                  Enming Yuan and
                  Enzhe Lu and
                  Fengxiang Tang and
                  Flood Sung and
                  Guangda Wei and
                  Guokun Lai and
                  Haiqing Guo and
                  Han Zhu and
                  Hao Ding and
                  Hao Hu and
                  Hao Yang and
                  Hao Zhang and
                  Haotian Yao and
                  Haotian Zhao and
                  Haoyu Lu and
                  Haoze Li and
                  Haozhen Yu and
                  Hongcheng Gao and
                  Huabin Zheng and
                  Huan Yuan and
                  Jia Chen and
                  Jianhang Guo and
                  Jianlin Su and
                  Jianzhou Wang and
                  Jie Zhao and
                  Jin Zhang and
                  Jingyuan Liu and
                  Junjie Yan and
                  Junyan Wu and
                  Lidong Shi and
                  Ling Ye and
                  Longhui Yu and
                  Mengnan Dong and
                  Neo Zhang and
                  Ningchen Ma and
                  Qiwei Pan and
                  Qucheng Gong and
                  Shaowei Liu and
                  Shengling Ma and
                  Shupeng Wei and
                  Sihan Cao and
                  Siying Huang and
                  Tao Jiang and
                  Weihao Gao and
                  Weimin Xiong and
                  Weiran He and
                  Weixiao Huang and
                  Wenhao Wu and
                  Wenyang He and
                  Xianghui Wei and
                  Xianqing Jia and
                  Xingzhe Wu and
                  Xinran Xu and
                  Xinxing Zu and
                  Xinyu Zhou and
                  Xuehai Pan and
                  Y. Charles and
                  Yang Li and
                  Yangyang Hu and
                  Yangyang Liu and
                  Yanru Chen and
                  Yejie Wang and
                  Yibo Liu and
                  Yidao Qin and
                  Yifeng Liu and
                  Ying Yang and
                  Yiping Bao and
                  Yulun Du and
                  Yuxin Wu and
                  Yuzhi Wang and
                  Zaida Zhou and
                  Zhaoji Wang and
                  Zhaowei Li and
                  Zhen Zhu and
                  Zheng Zhang and
                  Zhexu Wang and
                  Zhilin Yang and
                  Zhiqi Huang and
                  Zihao Huang and
                  Ziyao Xu and
                  Zonghan Yang},
  title        = {Kimi k1.5: Scaling Reinforcement Learning with LLMs},
  journal      = {CoRR},
  volume       = {abs/2501.12599},
  year         = {2025}
}

@article{abs-2507-06261,
  author       = {Gheorghe Comanici and
                  Eric Bieber and
                  Mike Schaekermann and
                  Ice Pasupat and
                  Noveen Sachdeva and
                  Inderjit S. Dhillon and
                  Marcel Blistein and
                  Ori Ram and
                  Dan Zhang and
                  Evan Rosen and
                  Luke Marris and
                  Sam Petulla and
                  Colin Gaffney and
                  Asaf Aharoni and
                  Nathan Lintz and
                  Tiago Cardal Pais and
                  Henrik Jacobsson and
                  Idan Szpektor and
                  Nan{-}Jiang Jiang and
                  Krishna Haridasan and
                  Ahmed Omran and
                  Nikunj Saunshi and
                  Dara Bahri and
                  Gaurav Mishra and
                  Eric Chu and
                  Toby Boyd and
                  Brad Hekman and
                  Aaron Parisi and
                  Chaoyi Zhang and
                  Kornraphop Kawintiranon and
                  Tania Bedrax{-}Weiss and
                  Oliver Wang and
                  Ya Xu and
                  Ollie Purkiss and
                  Uri Mendlovic and
                  Ila{\"{\i}} Deutel and
                  Nam Nguyen and
                  Adam Langley and
                  Flip Korn and
                  Lucia Rossazza and
                  Alexandre Ram{\'{e}} and
                  Sagar Waghmare and
                  Helen Miller and
                  Nathan Byrd and
                  Ashrith Sheshan and
                  Raia Hadsell Sangnie Bhardwaj and
                  Pawel Janus and
                  Tero Rissa and
                  Dan Horgan and
                  Sharon Silver and
                  Ayzaan Wahid and
                  Sergey Brin and
                  Yves Raimond and
                  Klemen Kloboves and
                  Cindy Wang and
                  Nitesh Bharadwaj Gundavarapu and
                  Ilia Shumailov and
                  Bo Wang and
                  Mantas Pajarskas and
                  Joe Heyward and
                  Martin Nikoltchev and
                  Maciej Kula and
                  Hao Zhou and
                  Zachary Garrett and
                  Sushant Kafle and
                  Sercan Arik and
                  Ankita Goel and
                  Mingyao Yang and
                  Jiho Park and
                  Koji Kojima and
                  Parsa Mahmoudieh and
                  Koray Kavukcuoglu and
                  Grace Chen and
                  Doug Fritz and
                  Anton Bulyenov and
                  Sudeshna Roy and
                  Dimitris Paparas and
                  Hadar Shemtov and
                  Bo{-}Juen Chen and
                  Robin Strudel and
                  David Reitter and
                  Aurko Roy and
                  Andrey Vlasov and
                  Changwan Ryu and
                  Chas Leichner and
                  Haichuan Yang and
                  Zelda Mariet and
                  Denis Vnukov and
                  Tim Sohn and
                  Amy Stuart and
                  Wei Liang and
                  Minmin Chen and
                  Praynaa Rawlani and
                  Christy Koh and
                  JD Co{-}Reyes and
                  Guangda Lai and
                  Praseem Banzal and
                  Dimitrios Vytiniotis and
                  Jieru Mei and
                  Mu Cai},
  title        = {Gemini 2.5: Pushing the Frontier with Advanced Reasoning, Multimodality,
                  Long Context, and Next Generation Agentic Capabilities},
  journal      = {CoRR},
  volume       = {abs/2507.06261},
  year         = {2025}
}

@article{abs-2503-09567,
  author       = {Qiguang Chen and
                  Libo Qin and
                  Jinhao Liu and
                  Dengyun Peng and
                  Jiannan Guan and
                  Peng Wang and
                  Mengkang Hu and
                  Yuhang Zhou and
                  Te Gao and
                  Wanxiang Che},
  title        = {Towards Reasoning Era: {A} Survey of Long Chain-of-Thought for Reasoning
                  Large Language Models},
  journal      = {CoRR},
  volume       = {abs/2503.09567},
  year         = {2025}
}

@article{abs-2402-03300,
  author       = {Zhihong Shao and
                  Peiyi Wang and
                  Qihao Zhu and
                  Runxin Xu and
                  Junxiao Song and
                  Mingchuan Zhang and
                  Y. K. Li and
                  Y. Wu and
                  Daya Guo},
  title        = {DeepSeekMath: Pushing the Limits of Mathematical Reasoning in Open
                  Language Models},
  journal      = {CoRR},
  volume       = {abs/2402.03300},
  year         = {2024}
}

@article{SchulmanWDRK17,
  author       = {John Schulman and
                  Filip Wolski and
                  Prafulla Dhariwal and
                  Alec Radford and
                  Oleg Klimov},
  title        = {Proximal Policy Optimization Algorithms},
  journal      = {CoRR},
  volume       = {abs/1707.06347},
  year         = {2017}
}

@article{abs-2408-03314,
  author       = {Charlie Snell and
                  Jaehoon Lee and
                  Kelvin Xu and
                  Aviral Kumar},
  title        = {Scaling {LLM} Test-Time Compute Optimally can be More Effective than
                  Scaling Model Parameters},
  journal      = {CoRR},
  volume       = {abs/2408.03314},
  year         = {2024}
}

@inproceedings{LightmanKBEBLLS24,
  author       = {Hunter Lightman and
                  Vineet Kosaraju and
                  Yuri Burda and
                  Harrison Edwards and
                  Bowen Baker and
                  Teddy Lee and
                  Jan Leike and
                  John Schulman and
                  Ilya Sutskever and
                  Karl Cobbe},
  title        = {Let's Verify Step by Step},
  booktitle    = {{ICLR}},
  publisher    = {OpenReview.net},
  year         = {2024}
}

@article{abs-2505-09388,
  author       = {An Yang and
                  Anfeng Li and
                  Baosong Yang and
                  Beichen Zhang and
                  Binyuan Hui and
                  Bo Zheng and
                  Bowen Yu and
                  Chang Gao and
                  Chengen Huang and
                  Chenxu Lv and
                  Chujie Zheng and
                  Dayiheng Liu and
                  Fan Zhou and
                  Fei Huang and
                  Feng Hu and
                  Hao Ge and
                  Haoran Wei and
                  Huan Lin and
                  Jialong Tang and
                  Jian Yang and
                  Jianhong Tu and
                  Jianwei Zhang and
                  Jian Yang and
                  Jiaxi Yang and
                  Jingren Zhou and
                  Jingren Zhou and
                  Junyang Lin and
                  Kai Dang and
                  Keqin Bao and
                  Kexin Yang and
                  Le Yu and
                  Lianghao Deng and
                  Mei Li and
                  Mingfeng Xue and
                  Mingze Li and
                  Pei Zhang and
                  Peng Wang and
                  Qin Zhu and
                  Rui Men and
                  Ruize Gao and
                  Shixuan Liu and
                  Shuang Luo and
                  Tianhao Li and
                  Tianyi Tang and
                  Wenbiao Yin and
                  Xingzhang Ren and
                  Xinyu Wang and
                  Xinyu Zhang and
                  Xuancheng Ren and
                  Yang Fan and
                  Yang Su and
                  Yichang Zhang and
                  Yinger Zhang and
                  Yu Wan and
                  Yuqiong Liu and
                  Zekun Wang and
                  Zeyu Cui and
                  Zhenru Zhang and
                  Zhipeng Zhou and
                  Zihan Qiu},
  title        = {Qwen3 Technical Report},
  journal      = {CoRR},
  volume       = {abs/2505.09388},
  year         = {2025}
}

@misc{qwq32b,
  author       = {Qwen Team},
  title        = {QwQ-32B: Embracing the Power of Reinforcement Learning},
  url          = {https://qwenlm.github.io/blog/qwq-32b/}, 
  month        = {March},
  year         = {2025}
}

@misc{bytedance2025seed1_6,
  author       = {{ByteDance Seed}},
  title        = {Introduction to Techniques Used in Seed1.6},
  howpublished = {\url{https://seed.bytedance.com/en/seed1_6}},
  year         = {2025}
}

@article{abs-2506-14758,
  author       = {Daixuan Cheng and
                  Shaohan Huang and
                  Xuekai Zhu and
                  Bo Dai and
                  Wayne Xin Zhao and
                  Zhenliang Zhang and
                  Furu Wei},
  title        = {Reasoning with Exploration: An Entropy Perspective},
  journal      = {CoRR},
  volume       = {abs/2506.14758},
  year         = {2025}
}

@article{abs-2506-01939,
  author       = {Shenzhi Wang and
                  Le Yu and
                  Chang Gao and
                  Chujie Zheng and
                  Shixuan Liu and
                  Rui Lu and
                  Kai Dang and
                  Xionghui Chen and
                  Jianxin Yang and
                  Zhenru Zhang and
                  Yuqiong Liu and
                  An Yang and
                  Andrew Zhao and
                  Yang Yue and
                  Shiji Song and
                  Bowen Yu and
                  Gao Huang and
                  Junyang Lin},
  title        = {Beyond the 80/20 Rule: High-Entropy Minority Tokens Drive Effective
                  Reinforcement Learning for {LLM} Reasoning},
  journal      = {CoRR},
  volume       = {abs/2506.01939},
  year         = {2025}
}

@inproceedings{KwonLZ0ZY0ZS23,
  author       = {Woosuk Kwon and
                  Zhuohan Li and
                  Siyuan Zhuang and
                  Ying Sheng and
                  Lianmin Zheng and
                  Cody Hao Yu and
                  Joseph Gonzalez and
                  Hao Zhang and
                  Ion Stoica},
  title        = {Efficient Memory Management for Large Language Model Serving with
                  PagedAttention},
  booktitle    = {{SOSP}},
  pages        = {611--626},
  publisher    = {{ACM}},
  year         = {2023}
}

@article{abs-2505-15612,
  author       = {Wei Liu and
                  Ruochen Zhou and
                  Yiyun Deng and
                  Yuzhen Huang and
                  Junteng Liu and
                  Yuntian Deng and
                  Yizhe Zhang and
                  Junxian He},
  title        = {Learn to Reason Efficiently with Adaptive Length-based Reward Shaping},
  journal      = {CoRR},
  volume       = {abs/2505.15612},
  year         = {2025}
}

@inproceedings{ShengZYWZZPL025,
  author       = {Guangming Sheng and
                  Chi Zhang and
                  Zilingfeng Ye and
                  Xibin Wu and
                  Wang Zhang and
                  Ru Zhang and
                  Yanghua Peng and
                  Haibin Lin and
                  Chuan Wu},
  title        = {HybridFlow: {A} Flexible and Efficient {RLHF} Framework},
  booktitle    = {EuroSys},
  pages        = {1279--1297},
  publisher    = {{ACM}},
  year         = {2025}
}

@inproceedings{HendrycksBKABTS21,
  author       = {Dan Hendrycks and
                  Collin Burns and
                  Saurav Kadavath and
                  Akul Arora and
                  Steven Basart and
                  Eric Tang and
                  Dawn Song and
                  Jacob Steinhardt},
  title        = {Measuring Mathematical Problem Solving With the {MATH} Dataset},
  booktitle    = {NeurIPS Datasets and Benchmarks},
  year         = {2021}
}

@article{abs-2311-12022,
  author       = {David Rein and
                  Betty Li Hou and
                  Asa Cooper Stickland and
                  Jackson Petty and
                  Richard Yuanzhe Pang and
                  Julien Dirani and
                  Julian Michael and
                  Samuel R. Bowman},
  title        = {{GPQA:} {A} Graduate-Level Google-Proof Q{\&}A Benchmark},
  journal      = {CoRR},
  volume       = {abs/2311.12022},
  year         = {2023}
}

@inproceedings{JainHGLYZWSSS25,
  author       = {Naman Jain and
                  King Han and
                  Alex Gu and
                  Wen{-}Ding Li and
                  Fanjia Yan and
                  Tianjun Zhang and
                  Sida Wang and
                  Armando Solar{-}Lezama and
                  Koushik Sen and
                  Ion Stoica},
  title        = {LiveCodeBench: Holistic and Contamination Free Evaluation of Large
                  Language Models for Code},
  booktitle    = {{ICLR}},
  publisher    = {OpenReview.net},
  year         = {2025}
}

@article{abs-2108-07732,
  author       = {Jacob Austin and
                  Augustus Odena and
                  Maxwell I. Nye and
                  Maarten Bosma and
                  Henryk Michalewski and
                  David Dohan and
                  Ellen Jiang and
                  Carrie J. Cai and
                  Michael Terry and
                  Quoc V. Le and
                  Charles Sutton},
  title        = {Program Synthesis with Large Language Models},
  journal      = {CoRR},
  volume       = {abs/2108.07732},
  year         = {2021}
}

@article{abs-2412-13147,
  author       = {Junnan Liu and
                  Hongwei Liu and
                  Linchen Xiao and
                  Ziyi Wang and
                  Kuikun Liu and
                  Songyang Gao and
                  Wenwei Zhang and
                  Songyang Zhang and
                  Kai Chen},
  title        = {Are Your LLMs Capable of Stable Reasoning?},
  journal      = {CoRR},
  volume       = {abs/2412.13147},
  year         = {2024}
}

@inproceedings{Ouyang0JAWMZASR22,
  author       = {Long Ouyang and
                  Jeffrey Wu and
                  Xu Jiang and
                  Diogo Almeida and
                  Carroll L. Wainwright and
                  Pamela Mishkin and
                  Chong Zhang and
                  Sandhini Agarwal and
                  Katarina Slama and
                  Alex Ray and
                  John Schulman and
                  Jacob Hilton and
                  Fraser Kelton and
                  Luke Miller and
                  Maddie Simens and
                  Amanda Askell and
                  Peter Welinder and
                  Paul F. Christiano and
                  Jan Leike and
                  Ryan Lowe},
  title        = {Training language models to follow instructions with human feedback},
  booktitle    = {NeurIPS},
  year         = {2022}
}

@article{abs-2503-24290,
  author       = {Jingcheng Hu and
                  Yinmin Zhang and
                  Qi Han and
                  Daxin Jiang and
                  Xiangyu Zhang and
                  Heung{-}Yeung Shum},
  title        = {Open-Reasoner-Zero: An Open Source Approach to Scaling Up Reinforcement
                  Learning on the Base Model},
  journal      = {CoRR},
  volume       = {abs/2503.24290},
  year         = {2025}
}

@article{abs-2501-03262,
  author       = {Jian Hu},
  title        = {{REINFORCE++:} {A} Simple and Efficient Approach for Aligning Large
                  Language Models},
  journal      = {CoRR},
  volume       = {abs/2501.03262},
  year         = {2025}
}

@inproceedings{AhmadianCGFKPUH24,
  author       = {Arash Ahmadian and
                  Chris Cremer and
                  Matthias Gall{\'{e}} and
                  Marzieh Fadaee and
                  Julia Kreutzer and
                  Olivier Pietquin and
                  Ahmet {\"{U}}st{\"{u}}n and
                  Sara Hooker},
  title        = {Back to Basics: Revisiting REINFORCE-Style Optimization for Learning
                  from Human Feedback in LLMs},
  booktitle    = {{ACL} {(1)}},
  pages        = {12248--12267},
  publisher    = {Association for Computational Linguistics},
  year         = {2024}
}

@inproceedings{LiXZL00L24,
  author       = {Ziniu Li and
                  Tian Xu and
                  Yushun Zhang and
                  Zhihang Lin and
                  Yang Yu and
                  Ruoyu Sun and
                  Zhi{-}Quan Luo},
  title        = {ReMax: {A} Simple, Effective, and Efficient Reinforcement Learning
                  Method for Aligning Large Language Models},
  booktitle    = {{ICML}},
  publisher    = {OpenReview.net},
  year         = {2024}
}

@article{abs-2503-20783,
  author       = {Zichen Liu and
                  Changyu Chen and
                  Wenjun Li and
                  Penghui Qi and
                  Tianyu Pang and
                  Chao Du and
                  Wee Sun Lee and
                  Min Lin},
  title        = {Understanding R1-Zero-Like Training: {A} Critical Perspective},
  journal      = {CoRR},
  volume       = {abs/2503.20783},
  year         = {2025}
}

@article{abs-2503-14476,
  author       = {Qiying Yu and
                  Zheng Zhang and
                  Ruofei Zhu and
                  Yufeng Yuan and
                  Xiaochen Zuo and
                  Yu Yue and
                  Tiantian Fan and
                  Gaohong Liu and
                  Lingjun Liu and
                  Xin Liu and
                  Haibin Lin and
                  Zhiqi Lin and
                  Bole Ma and
                  Guangming Sheng and
                  Yuxuan Tong and
                  Chi Zhang and
                  Mofan Zhang and
                  Wang Zhang and
                  Hang Zhu and
                  Jinhua Zhu and
                  Jiaze Chen and
                  Jiangjie Chen and
                  Chengyi Wang and
                  Hongli Yu and
                  Weinan Dai and
                  Yuxuan Song and
                  Xiangpeng Wei and
                  Hao Zhou and
                  Jingjing Liu and
                  Wei{-}Ying Ma and
                  Ya{-}Qin Zhang and
                  Lin Yan and
                  Mu Qiao and
                  Yonghui Wu and
                  Mingxuan Wang},
  title        = {{DAPO:} An Open-Source {LLM} Reinforcement Learning System at Scale},
  journal      = {CoRR},
  volume       = {abs/2503.14476},
  year         = {2025}
}

@article{abs-2504-05118,
  author       = {Yu Yue and
                  Yufeng Yuan and
                  Qiying Yu and
                  Xiaochen Zuo and
                  Ruofei Zhu and
                  Wenyuan Xu and
                  Jiaze Chen and
                  Cheng{-}Xiang Wang and
                  Tiantian Fan and
                  Zhengyin Du and
                  Xiangpeng Wei and
                  Xiangyu Yu and
                  Gaohong Liu and
                  Juncai Liu and
                  Lingjun Liu and
                  Haibin Lin and
                  Zhiqi Lin and
                  Bole Ma and
                  Chi Zhang and
                  Mofan Zhang and
                  Wang Zhang and
                  Hang Zhu and
                  Ru Zhang and
                  Xin Liu and
                  Mingxuan Wang and
                  Yonghui Wu and
                  Lin Yan},
  title        = {{VAPO:} Efficient and Reliable Reinforcement Learning for Advanced
                  Reasoning Tasks},
  journal      = {CoRR},
  volume       = {abs/2504.05118},
  year         = {2025}
}

@article{abs-2505-19300,
  author       = {Junnan Liu and
                  Linhao Luo and
                  Thuy{-}Trang Vu and
                  Gholamreza Haffari},
  title        = {SituatedThinker: Grounding {LLM} Reasoning with Real-World through
                  Situated Thinking},
  journal      = {CoRR},
  volume       = {abs/2505.19300},
  year         = {2025}
}

@article{abs-2503-09516,
  author       = {Bowen Jin and
                  Hansi Zeng and
                  Zhenrui Yue and
                  Dong Wang and
                  Hamed Zamani and
                  Jiawei Han},
  title        = {Search-R1: Training LLMs to Reason and Leverage Search Engines with
                  Reinforcement Learning},
  journal      = {CoRR},
  volume       = {abs/2503.09516},
  year         = {2025}
}

@inproceedings{Wei0SBIXCLZ22,
  author       = {Jason Wei and
                  Xuezhi Wang and
                  Dale Schuurmans and
                  Maarten Bosma and
                  Brian Ichter and
                  Fei Xia and
                  Ed H. Chi and
                  Quoc V. Le and
                  Denny Zhou},
  title        = {Chain-of-Thought Prompting Elicits Reasoning in Large Language Models},
  booktitle    = {NeurIPS},
  year         = {2022}
}

@article{abs-2410-18982,
  author       = {Yiwei Qin and
                  Xuefeng Li and
                  Haoyang Zou and
                  Yixiu Liu and
                  Shijie Xia and
                  Zhen Huang and
                  Yixin Ye and
                  Weizhe Yuan and
                  Hector Liu and
                  Yuanzhi Li and
                  Pengfei Liu},
  title        = {{O1} Replication Journey: {A} Strategic Progress Report - Part 1},
  journal      = {CoRR},
  volume       = {abs/2410.18982},
  year         = {2024}
}

@article{abs-2411-16489,
  author       = {Zhen Huang and
                  Haoyang Zou and
                  Xuefeng Li and
                  Yixiu Liu and
                  Yuxiang Zheng and
                  Ethan Chern and
                  Shijie Xia and
                  Yiwei Qin and
                  Weizhe Yuan and
                  Pengfei Liu},
  title        = {{O1} Replication Journey - Part 2: Surpassing O1-preview through Simple
                  Distillation, Big Progress or Bitter Lesson?},
  journal      = {CoRR},
  volume       = {abs/2411.16489},
  year         = {2024}
}

@article{abs-2501-19393,
  author       = {Niklas Muennighoff and
                  Zitong Yang and
                  Weijia Shi and
                  Xiang Lisa Li and
                  Li Fei{-}Fei and
                  Hannaneh Hajishirzi and
                  Luke Zettlemoyer and
                  Percy Liang and
                  Emmanuel J. Cand{\`{e}}s and
                  Tatsunori Hashimoto},
  title        = {s1: Simple test-time scaling},
  journal      = {CoRR},
  volume       = {abs/2501.19393},
  year         = {2025}
}

@article{abs-2506-04178,
  author       = {Etash Kumar Guha and
                  Ryan Marten and
                  Sedrick Keh and
                  Negin Raoof and
                  Georgios Smyrnis and
                  Hritik Bansal and
                  Marianna Nezhurina and
                  Jean Mercat and
                  Trung Vu and
                  Zayne Sprague and
                  Ashima Suvarna and
                  Benjamin Feuer and
                  Liangyu Chen and
                  Zaid Khan and
                  Eric Frankel and
                  Sachin Grover and
                  Caroline Choi and
                  Niklas Muennighoff and
                  Shiye Su and
                  Wanjia Zhao and
                  John Yang and
                  Shreyas Pimpalgaonkar and
                  Kartik Sharma and
                  Charlie Cheng{-}Jie Ji and
                  Yichuan Deng and
                  Sarah M. Pratt and
                  Vivek Ramanujan and
                  Jon Saad{-}Falcon and
                  Jeffrey Li and
                  Achal Dave and
                  Alon Albalak and
                  Kushal Arora and
                  Blake Wulfe and
                  Chinmay Hegde and
                  Greg Durrett and
                  Sewoong Oh and
                  Mohit Bansal and
                  Saadia Gabriel and
                  Aditya Grover and
                  Kai{-}Wei Chang and
                  Vaishaal Shankar and
                  Aaron Gokaslan and
                  Mike A. Merrill and
                  Tatsunori Hashimoto and
                  Yejin Choi and
                  Jenia Jitsev and
                  Reinhard Heckel and
                  Maheswaran Sathiamoorthy and
                  Alexandros G. Dimakis and
                  Ludwig Schmidt},
  title        = {OpenThoughts: Data Recipes for Reasoning Models},
  journal      = {CoRR},
  volume       = {abs/2506.04178},
  year         = {2025}
}

@article{abs-2503-05592,
  author       = {Huatong Song and
                  Jinhao Jiang and
                  Yingqian Min and
                  Jie Chen and
                  Zhipeng Chen and
                  Wayne Xin Zhao and
                  Lei Fang and
                  Ji{-}Rong Wen},
  title        = {R1-Searcher: Incentivizing the Search Capability in LLMs via Reinforcement
                  Learning},
  journal      = {CoRR},
  volume       = {abs/2503.05592},
  year         = {2025}
}

@inproceedings{abs-2504-14945,
  author       = {Yue Zhang and
                  Yafu Li and
                  Ganqu Cui and
                  Yu Cheng and
                  Zhi Wang and
                  Xiaoye Qu and
                  Jianhao Yan and
                  Zican Hu},
  title        = {Learning to Reason under Off-Policy Guidance},
  booktitle={The Thirty-ninth Annual Conference on Neural Information Processing Systems},
  year         = {2025}
}

@article{abs-2507-15855,
  author       = {Yichen Huang and
                  Lin F. Yang},
  title        = {Gemini 2.5 Pro Capable of Winning Gold at {IMO} 2025},
  journal      = {CoRR},
  volume       = {abs/2507.15855},
  year         = {2025}
}

@article{abs-2412-15115,
  author       = {An Yang and
                  Baosong Yang and
                  Beichen Zhang and
                  Binyuan Hui and
                  Bo Zheng and
                  Bowen Yu and
                  Chengyuan Li and
                  Dayiheng Liu and
                  Fei Huang and
                  Haoran Wei and
                  Huan Lin and
                  Jian Yang and
                  Jianhong Tu and
                  Jianwei Zhang and
                  Jianxin Yang and
                  Jiaxi Yang and
                  Jingren Zhou and
                  Junyang Lin and
                  Kai Dang and
                  Keming Lu and
                  Keqin Bao and
                  Kexin Yang and
                  Le Yu and
                  Mei Li and
                  Mingfeng Xue and
                  Pei Zhang and
                  Qin Zhu and
                  Rui Men and
                  Runji Lin and
                  Tianhao Li and
                  Tingyu Xia and
                  Xingzhang Ren and
                  Xuancheng Ren and
                  Yang Fan and
                  Yang Su and
                  Yichang Zhang and
                  Yu Wan and
                  Yuqiong Liu and
                  Zeyu Cui and
                  Zhenru Zhang and
                  Zihan Qiu},
  title        = {Qwen2.5 Technical Report},
  journal      = {CoRR},
  volume       = {abs/2412.15115},
  year         = {2024}
}

@article{abs-2505-24760,
  author       = {Zafir Stojanovski and
                  Oliver Stanley and
                  Joe Sharratt and
                  Richard Jones and
                  Abdulhakeem Adefioye and
                  Jean Kaddour and
                  Andreas K{\"{o}}pf},
  title        = {{REASONING} {GYM:} Reasoning Environments for Reinforcement Learning
                  with Verifiable Rewards},
  journal      = {CoRR},
  volume       = {abs/2505.24760},
  year         = {2025}
}

@article{abs-2407-21783,
  author       = {Abhimanyu Dubey and
                  Abhinav Jauhri and
                  Abhinav Pandey and
                  Abhishek Kadian and
                  Ahmad Al{-}Dahle and
                  Aiesha Letman and
                  Akhil Mathur and
                  Alan Schelten and
                  Amy Yang and
                  Angela Fan and
                  Anirudh Goyal and
                  Anthony Hartshorn and
                  Aobo Yang and
                  Archi Mitra and
                  Archie Sravankumar and
                  Artem Korenev and
                  Arthur Hinsvark and
                  Arun Rao and
                  Aston Zhang and
                  Aur{\'{e}}lien Rodriguez and
                  Austen Gregerson and
                  Ava Spataru and
                  Baptiste Rozi{\`{e}}re and
                  Bethany Biron and
                  Binh Tang and
                  Bobbie Chern and
                  Charlotte Caucheteux and
                  Chaya Nayak and
                  Chloe Bi and
                  Chris Marra and
                  Chris McConnell and
                  Christian Keller and
                  Christophe Touret and
                  Chunyang Wu and
                  Corinne Wong and
                  Cristian Canton Ferrer and
                  Cyrus Nikolaidis and
                  Damien Allonsius and
                  Daniel Song and
                  Danielle Pintz and
                  Danny Livshits and
                  David Esiobu and
                  Dhruv Choudhary and
                  Dhruv Mahajan and
                  Diego Garcia{-}Olano and
                  Diego Perino and
                  Dieuwke Hupkes and
                  Egor Lakomkin and
                  Ehab AlBadawy and
                  Elina Lobanova and
                  Emily Dinan and
                  Eric Michael Smith and
                  Filip Radenovic and
                  Frank Zhang and
                  Gabriel Synnaeve and
                  Gabrielle Lee and
                  Georgia Lewis Anderson and
                  Graeme Nail and
                  Gr{\'{e}}goire Mialon and
                  Guan Pang and
                  Guillem Cucurell and
                  Hailey Nguyen and
                  Hannah Korevaar and
                  Hu Xu and
                  Hugo Touvron and
                  Iliyan Zarov and
                  Imanol Arrieta Ibarra and
                  Isabel M. Kloumann and
                  Ishan Misra and
                  Ivan Evtimov and
                  Jade Copet and
                  Jaewon Lee and
                  Jan Geffert and
                  Jana Vranes and
                  Jason Park and
                  Jay Mahadeokar and
                  Jeet Shah and
                  Jelmer van der Linde and
                  Jennifer Billock and
                  Jenny Hong and
                  Jenya Lee and
                  Jeremy Fu and
                  Jianfeng Chi and
                  Jianyu Huang and
                  Jiawen Liu and
                  Jie Wang and
                  Jiecao Yu and
                  Joanna Bitton and
                  Joe Spisak and
                  Jongsoo Park and
                  Joseph Rocca and
                  Joshua Johnstun and
                  Joshua Saxe and
                  Junteng Jia and
                  Kalyan Vasuden Alwala and
                  Kartikeya Upasani and
                  Kate Plawiak and
                  Ke Li and
                  Kenneth Heafield and
                  Kevin Stone and
                  et al.},
  title        = {The Llama 3 Herd of Models},
  journal      = {CoRR},
  volume       = {abs/2407.21783},
  year         = {2024}
}

@article{abs-2504-13837,
  author       = {Yang Yue and
                  Zhiqi Chen and
                  Rui Lu and
                  Andrew Zhao and
                  Zhaokai Wang and
                  Yang Yue and
                  Shiji Song and
                  Gao Huang},
  title        = {Does Reinforcement Learning Really Incentivize Reasoning Capacity
                  in LLMs Beyond the Base Model?},
  journal      = {CoRR},
  volume       = {abs/2504.13837},
  year         = {2025}
}

@inproceedings{LiaoXCLHZLW25,
  author       = {Mengqi Liao and
                  Xiangyu Xi and
                  Ruinian Chen and
                  Jia Leng and
                  Yangen Hu and
                  Ke Zeng and
                  Shuai Liu and
                  Huaiyu Wan},
  title        = {Enhancing Efficiency and Exploration in Reinforcement Learning for
                  LLMs},
  booktitle    = {{EMNLP}},
  pages        = {1451--1463},
  publisher    = {Association for Computational Linguistics},
  year         = {2025}
}

@article{abs-2509-25666,
  author       = {Justin Chih{-}Yao Chen and
                  Becky Xiangyu Peng and
                  Prafulla Kumar Choubey and
                  Kung{-}Hsiang Huang and
                  Jiaxin Zhang and
                  Mohit Bansal and
                  Chien{-}Sheng Wu},
  title        = {Nudging the Boundaries of {LLM} Reasoning},
  journal      = {CoRR},
  volume       = {abs/2509.25666},
  year         = {2025}
}

@inproceedings{HeFW25,
  author       = {Andre Wang He and
                  Daniel Fried and
                  Sean Welleck},
  title        = {Rewarding the Unlikely: Lifting {GRPO} Beyond Distribution Sharpening},
  booktitle    = {{EMNLP}},
  pages        = {25548--25560},
  publisher    = {Association for Computational Linguistics},
  year         = {2025}
}

@article{abs-2504-10478,
  author       = {Xingyu Dang and
                  Christina Baek and
                  Kaiyue Wen and
                  Zico Kolter and
                  Aditi Raghunathan},
  title        = {Weight Ensembling Improves Reasoning in Language Models},
  journal      = {CoRR},
  volume       = {abs/2504.10478},
  year         = {2025}
}

@article{abs-2504-07912,
  author       = {Rosie Zhao and
                  Alexandru Meterez and
                  Sham M. Kakade and
                  Cengiz Pehlevan and
                  Samy Jelassi and
                  Eran Malach},
  title        = {Echo Chamber: {RL} Post-training Amplifies Behaviors Learned in Pretraining},
  journal      = {CoRR},
  volume       = {abs/2504.07912},
  year         = {2025}
}

@article{abs-2509-04419,
  author       = {Xingtai Lv and
                  Yuxin Zuo and
                  Youbang Sun and
                  Hongyi Liu and
                  Yuntian Wei and
                  Zhekai Chen and
                  Lixuan He and
                  Xuekai Zhu and
                  Kaiyan Zhang and
                  Bingning Wang and
                  Ning Ding and
                  Bowen Zhou},
  title        = {Towards a Unified View of Large Language Model Post-Training},
  journal      = {CoRR},
  volume       = {abs/2509.04419},
  year         = {2025}
}

@article{abs-2508-11408,
  author       = {Wenhao Zhang and
                  Yuexiang Xie and
                  Yuchang Sun and
                  Yanxi Chen and
                  Guoyin Wang and
                  Yaliang Li and
                  Bolin Ding and
                  Jingren Zhou},
  title        = {On-Policy {RL} Meets Off-Policy Experts: Harmonizing Supervised Fine-Tuning
                  and Reinforcement Learning via Dynamic Weighting},
  journal      = {CoRR},
  volume       = {abs/2508.11408},
  year         = {2025}
}

@article{abs-2509-06948,
  author       = {Liang Chen and
                  Xueting Han and
                  Li Shen and
                  Jing Bai and
                  Kam{-}Fai Wong},
  title        = {Beyond Two-Stage Training: Cooperative {SFT} and {RL} for {LLM} Reasoning},
  journal      = {CoRR},
  volume       = {abs/2509.06948},
  year         = {2025}
}

@article{abs-2502-01456,
  author       = {Ganqu Cui and
                  Lifan Yuan and
                  Zefan Wang and
                  Hanbin Wang and
                  Wendi Li and
                  Bingxiang He and
                  Yuchen Fan and
                  Tianyu Yu and
                  Qixin Xu and
                  Weize Chen and
                  Jiarui Yuan and
                  Huayu Chen and
                  Kaiyan Zhang and
                  Xingtai Lv and
                  Shuo Wang and
                  Yuan Yao and
                  Xu Han and
                  Hao Peng and
                  Yu Cheng and
                  Zhiyuan Liu and
                  Maosong Sun and
                  Bowen Zhou and
                  Ning Ding},
  title        = {Process Reinforcement through Implicit Rewards},
  journal      = {CoRR},
  volume       = {abs/2502.01456},
  year         = {2025}
}

@article{abs-2507-18071,
  author       = {Chujie Zheng and
                  Shixuan Liu and
                  Mingze Li and
                  Xiong{-}Hui Chen and
                  Bowen Yu and
                  Chang Gao and
                  Kai Dang and
                  Yuqiong Liu and
                  Rui Men and
                  An Yang and
                  Jingren Zhou and
                  Junyang Lin},
  title        = {Group Sequence Policy Optimization},
  journal      = {CoRR},
  volume       = {abs/2507.18071},
  year         = {2025}
}

@article{abs-1712-00409,
  author       = {Joel Hestness and
                  Sharan Narang and
                  Newsha Ardalani and
                  Gregory F. Diamos and
                  Heewoo Jun and
                  Hassan Kianinejad and
                  Md. Mostofa Ali Patwary and
                  Yang Yang and
                  Yanqi Zhou},
  title        = {Deep Learning Scaling is Predictable, Empirically},
  journal      = {CoRR},
  volume       = {abs/1712.00409},
  year         = {2017}
}

@article{abs-2010-14701,
  author       = {Tom Henighan and
                  Jared Kaplan and
                  Mor Katz and
                  Mark Chen and
                  Christopher Hesse and
                  Jacob Jackson and
                  Heewoo Jun and
                  Tom B. Brown and
                  Prafulla Dhariwal and
                  Scott Gray and
                  Chris Hallacy and
                  Benjamin Mann and
                  Alec Radford and
                  Aditya Ramesh and
                  Nick Ryder and
                  Daniel M. Ziegler and
                  John Schulman and
                  Dario Amodei and
                  Sam McCandlish},
  title        = {Scaling Laws for Autoregressive Generative Modeling},
  journal      = {CoRR},
  volume       = {abs/2010.14701},
  year         = {2020}
}

@article{abs-2510-08529,
  author       = {Xiangyuan Xue and
                  Yifan Zhou and
                  Guibin Zhang and
                  Zaibin Zhang and
                  Yijiang Li and
                  Chen Zhang and
                  Zhenfei Yin and
                  Philip Torr and
                  Wanli Ouyang and
                  Lei Bai},
  title        = {CoMAS: Co-Evolving Multi-Agent Systems via Interaction Rewards},
  journal      = {CoRR},
  volume       = {abs/2510.08529},
  year         = {2025}
}

@article{abs-2602-23008,
  author       = {Zeyuan Liu and 
                  Jeonghye Kim and 
                  Xufang Luo and 
                  Dongsheng Li and 
                  Yuqing Yang},
  title        = {Exploratory Memory-Augmented LLM Agent via Hybrid On- and Off-Policy Optimization},
  journal      = {CoRR},
  volume       = {abs/2602.23008},
  year         = {2026}
}

@article{abs-2506-19767,
  author       = {Yuqian Fu and
                  Tinghong Chen and
                  Jiajun Chai and
                  Xihuai Wang and
                  Songjun Tu and
                  Guojun Yin and
                  Wei Lin and
                  Qichao Zhang and
                  Yuanheng Zhu and
                  Dongbin Zhao},
  title        = {{SRFT:} {A} Single-Stage Method with Supervised and Reinforcement
                  Fine-Tuning for Reasoning},
  journal      = {CoRR},
  volume       = {abs/2506.19767},
  year         = {2025}
}

@article{abs-2506-05316,
  author       = {Yifan Sun and
                  Jingyan Shen and
                  Yibin Wang and
                  Tianyu Chen and
                  Zhendong Wang and
                  Mingyuan Zhou and
                  Huan Zhang},
  title        = {Improving Data Efficiency for {LLM} Reinforcement Fine-tuning Through
                  Difficulty-targeted Online Data Selection and Rollout Replay},
  journal      = {CoRR},
  volume       = {abs/2506.05316},
  year         = {2025}
}

@article{HuangYMZFWCPFQL25,
  author       = {Lei Huang and
                  Weijiang Yu and
                  Weitao Ma and
                  Weihong Zhong and
                  Zhangyin Feng and
                  Haotian Wang and
                  Qianglong Chen and
                  Weihua Peng and
                  Xiaocheng Feng and
                  Bing Qin and
                  Ting Liu},
  title        = {A Survey on Hallucination in Large Language Models: Principles, Taxonomy,
                  Challenges, and Open Questions},
  journal      = {{ACM} Trans. Inf. Syst.},
  volume       = {43},
  number       = {2},
  pages        = {42:1--42:55},
  year         = {2025}
}

@article{abs-2505-24726,
  author       = {Shelly Bensal and
                  Umar Jamil and
                  Christopher Bryant and
                  Melisa Russak and
                  Kiran Kamble and
                  Dmytro Mozolevskyi and
                  Muayad Ali and
                  Waseem AlShikh},
  title        = {Reflect, Retry, Reward: Self-Improving LLMs via Reinforcement Learning},
  journal      = {CoRR},
  volume       = {abs/2505.24726},
  year         = {2025}
}

@inproceedings{KumarZASCSBIBRZ25,
  author       = {Aviral Kumar and
                  Vincent Zhuang and
                  Rishabh Agarwal and
                  Yi Su and
                  John D. Co{-}Reyes and
                  Avi Singh and
                  Kate Baumli and
                  Shariq Iqbal and
                  Colton Bishop and
                  Rebecca Roelofs and
                  Lei M. Zhang and
                  Kay McKinney and
                  Disha Shrivastava and
                  Cosmin Paduraru and
                  George Tucker and
                  Doina Precup and
                  Feryal M. P. Behbahani and
                  Aleksandra Faust},
  title        = {Training Language Models to Self-Correct via Reinforcement Learning},
  booktitle    = {{ICLR}},
  publisher    = {OpenReview.net},
  year         = {2025}
}

@article{abs-2506-01369,
  author       = {Fuxiang Zhang and
                  Jiacheng Xu and
                  Chaojie Wang and
                  Ce Cui and
                  Yang Liu and
                  Bo An},
  title        = {Incentivizing LLMs to Self-Verify Their Answers},
  journal      = {CoRR},
  volume       = {abs/2506.01369},
  year         = {2025}
}

@article{abs-2509-25760,
  author       = {Zhepei Wei and
                  Xiao Yang and
                  Kai Sun and
                  Jiaqi Wang and
                  Rulin Shao and
                  Sean Chen and
                  Mohammad Kachuee and
                  Teja Gollapudi and
                  Tony Liao and
                  Nicolas Scheffer and
                  Rakesh Wanga and
                  Anuj Kumar and
                  Yu Meng and
                  Wen{-}tau Yih and
                  Xin Luna Dong},
  title        = {TruthRL: Incentivizing Truthful LLMs via Reinforcement Learning},
  journal      = {CoRR},
  volume       = {abs/2509.25760},
  year         = {2025}
}

@article{abs-2510-25992,
  author       = {Yihe Deng and
                  I{-}Hung Hsu and
                  Jun Yan and
                  Zifeng Wang and
                  Rujun Han and
                  Gufeng Zhang and
                  Yanfei Chen and
                  Wei Wang and
                  Tomas Pfister and
                  Chen{-}Yu Lee},
  title        = {Supervised Reinforcement Learning: From Expert Trajectories to Step-wise
                  Reasoning},
  journal      = {CoRR},
  volume       = {abs/2510.25992},
  year         = {2025}
}

@article{abs-2503-09501,
  author       = {Ziyu Wan and
                  Yunxiang Li and
                  Yan Song and
                  Hanjing Wang and
                  Linyi Yang and
                  Mark Schmidt and
                  Jun Wang and
                  Weinan Zhang and
                  Shuyue Hu and
                  Ying Wen},
  title        = {ReMA: Learning to Meta-think for LLMs with Multi-Agent Reinforcement
                  Learning},
  journal      = {CoRR},
  volume       = {abs/2503.09501},
  year         = {2025}
}

@article{abs-2504-15257,
  author       = {Hongcheng Gao and
                  Yue Liu and
                  Yufei He and
                  Longxu Dou and
                  Chao Du and
                  Zhijie Deng and
                  Bryan Hooi and
                  Min Lin and
                  Tianyu Pang},
  title        = {FlowReasoner: Reinforcing Query-Level Meta-Agents},
  journal      = {CoRR},
  volume       = {abs/2504.15257},
  year         = {2025}
}

@article{abs-2504-16129,
  author       = {Junwei Liao and
                  Muning Wen and
                  Jun Wang and
                  Weinan Zhang},
  title        = {{MARFT:} Multi-Agent Reinforcement Fine-Tuning},
  journal      = {CoRR},
  volume       = {abs/2504.16129},
  year         = {2025}
}

@article{abs-2506-07527,
  author       = {Lu Ma and
                  Hao Liang and
                  Meiyi Qiang and
                  Lexiang Tang and
                  Xiaochen Ma and
                  Zhen Hao Wong and
                  Junbo Niu and
                  Chengyu Shen and
                  Runming He and
                  Bin Cui and
                  Wentao Zhang},
  title        = {Learning What Reinforcement Learning Can't: Interleaved Online
                  Fine-Tuning for Hardest Questions},
  journal      = {CoRR},
  volume       = {abs/2506.07527},
  year         = {2025}
}

@article{abs-2506-02208,
  author       = {Hongling Xu and
                  Qi Zhu and
                  Heyuan Deng and
                  Jinpeng Li and
                  Lu Hou and
                  Yasheng Wang and
                  Lifeng Shang and
                  Ruifeng Xu and
                  Fei Mi},
  title        = {{KDRL:} Post-Training Reasoning LLMs via Unified Knowledge Distillation
                  and Reinforcement Learning},
  journal      = {CoRR},
  volume       = {abs/2506.02208},
  year         = {2025}
}

@article{abs-2510-23595,
  author       = {Yixing Chen and
                  Yiding Wang and
                  Siqi Zhu and
                  Haofei Yu and
                  Tao Feng and
                  Muhan Zhang and
                  Mostofa Patwary and
                  Jiaxuan You},
  title        = {Multi-Agent Evolve: {LLM} Self-Improve through Co-evolution},
  journal      = {CoRR},
  volume       = {abs/2510.23595},
  year         = {2025}
}

@article{abs-2504-11456,
  author       = {Zhiwei He and
                  Tian Liang and
                  Jiahao Xu and
                  Qiuzhi Liu and
                  Xingyu Chen and
                  Yue Wang and
                  Linfeng Song and
                  Dian Yu and
                  Zhenwen Liang and
                  Wenxuan Wang and
                  Zhuosheng Zhang and
                  Rui Wang and
                  Zhaopeng Tu and
                  Haitao Mi and
                  Dong Yu},
  title        = {DeepMath-103K: {A} Large-Scale, Challenging, Decontaminated, and Verifiable
                  Mathematical Dataset for Advancing Reasoning},
  journal      = {CoRR},
  volume       = {abs/2504.11456},
  year         = {2025}
}

@article{abs-2509-26306,
  author       = {Hehai Lin and
                  Shilei Cao and
                  Sudong Wang and
                  Haotian Wu and
                  Minzhi Li and
                  Linyi Yang and
                  Juepeng Zheng and
                  Chengwei Qin},
  title        = {Interactive Learning for {LLM} Reasoning},
  journal      = {CoRR},
  volume       = {abs/2509.26306},
  year         = {2025}
}

@article{abs-2601-18734,
  author       = {Siyan Zhao and
                  Zhihui Xie and
                  Mengchen Liu and
                  Jing Huang and
                  Guan Pang and
                  Feiyu Chen and
                  Aditya Grover},
  title        = {Self-Distilled Reasoner: On-Policy Self-Distillation for Large Language
                  Models},
  journal      = {CoRR},
  volume       = {abs/2601.18734},
  year         = {2026}
}

@inproceedings{ShinnCGNY23,
  author       = {Noah Shinn and
                  Federico Cassano and
                  Ashwin Gopinath and
                  Karthik Narasimhan and
                  Shunyu Yao},
  title        = {Reflexion: language agents with verbal reinforcement learning},
  booktitle    = {NeurIPS},
  year         = {2023}
}

@inproceedings{DouY0CP24,
  author       = {Zi{-}Yi Dou and
                  Cheng{-}Fu Yang and
                  Xueqing Wu and
                  Kai{-}Wei Chang and
                  Nanyun Peng},
  title        = {Re-ReST: Reflection-Reinforced Self-Training for Language Agents},
  booktitle    = {{EMNLP}},
  pages        = {15394--15411},
  publisher    = {Association for Computational Linguistics},
  year         = {2024}
}

@article{abs-2505-04588,
  author       = {Hao Sun and
                  Zile Qiao and
                  Jiayan Guo and
                  Xuanbo Fan and
                  Yingyan Hou and
                  Yong Jiang and
                  Pengjun Xie and
                  Yan Zhang and
                  Fei Huang and
                  Jingren Zhou},
  title        = {ZeroSearch: Incentivize the Search Capability of LLMs without Searching},
  journal      = {CoRR},
  volume       = {abs/2505.04588},
  year         = {2025}
}
\bibliographystyle{colm}

\clearpage
\appendix
\addcontentsline{toc}{section}{Appendix}
\part{Appendix}
\vspace{20pt}
\parttoc
\clearpage 

\section{Analysis of Approximation Error of Amended Importance Sampling Coefficient} \label{app:error_analysis} 

In this section, we analyze the approximation error introduced by the amended importance sampling coefficient defined in \Cref{eq:modified_importance_sampling}, compared with the original importance sampling coefficient. 
Our key idea is to decompose the approximation error into two terms: the concentration of the current policy distribution and the total variation distance between the current and old policies.

\begin{proposition} \label{prop:error_analysis}
For any policies $\pi_\theta$, $\pi_{\theta_{\text{old}}}$, and $\pi_{\epsilon}$, suppose that the advantage is bounded as $|\tilde{A}_t| \leq A_{\max}$. Define the concentration term of the current policy as
\begin{equation}
    \eta(\pi_\theta)
    =
    1 -
    \sum_{\tau_t}
    \pi_\theta(\tau_t \mid \tau_{<t})^2 .
\end{equation}
Then, the approximation error satisfies
\begin{equation}
\begin{aligned}
    &\left|
    \mathbb{E}_{\theta_{\text{old}}}
    \left[
    \sum_{\tau_t \in \tau_\epsilon}
    \pi_\theta(\tau_t \mid \tau_{<t}) \tilde{A}_t
    \right]
    -
    \mathbb{E}_{\pi_\epsilon}
    \left[
    \sum_{\tau_t \in \tau_\epsilon}
    \frac{
    \pi_\theta(\tau_t \mid \tau_{<t})
    }{
    \pi_\epsilon(\tau_t \mid I_t)
    }
    \tilde{A}_t
    \right]
    \right|  \\
    &\qquad \leq
    A_{\max}
    \left(
    \eta(\pi_\theta)
    +
    2 d_{\mathrm{TV}}
    \left(
    \pi_\theta,
    \pi_{\theta_{\text{old}}}
    \right)
    \right),
\end{aligned}
\end{equation}
where $d_{\mathrm{TV}}$ denotes the total variation distance:
\begin{equation}
    d_{\mathrm{TV}}
    \left(
    \pi_\theta,
    \pi_{\theta_{\text{old}}}
    \right)
    =
    \frac{1}{2}
    \sum_{\tau_t}
    \left|
    \pi_\theta(\tau_t \mid \tau_{<t})
    -
    \pi_{\theta_{\text{old}}}(\tau_t \mid \tau_{<t})
    \right|.
\end{equation}
\end{proposition}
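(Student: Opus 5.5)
The plan is to prove the bound one token position at a time, with the context held fixed, and then assemble positions. For an external position $t$ with fixed prefix $\tau_{<t}$ and interaction context $I_t$, write $p(\cdot)=\pi_\theta(\cdot\mid\tau_{<t})$, $p_{\text{old}}(\cdot)=\pi_{\theta_{\text{old}}}(\cdot\mid\tau_{<t})$, $e(\cdot)=\pi_\epsilon(\cdot\mid I_t)$ and $A(\cdot)=\tilde A_t$, viewed as a function of the token, with $|A|\le A_{\max}$. Under this reading the two sides of the proposition become single-token expectations:
\begin{equation*}
\mathbb{E}_{\tau_t\sim p_{\text{old}}}\bigl[p(\tau_t)A(\tau_t)\bigr]=\sum_{x} p_{\text{old}}(x)\,p(x)\,A(x),
\end{equation*}
and
\begin{equation*}
\mathbb{E}_{\tau_t\sim e}\Bigl[\frac{p(\tau_t)}{e(\tau_t)}A(\tau_t)\Bigr]=\sum_{x} p(x)A(x).
\end{equation*}
The second identity is the standard importance-sampling identity. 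It needs $p\ll e$, i.e.\ the collaborator puts positive mass wherever the policy does, which I will state as an implicit standing assumption.

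The core step is an add-and-subtract decomposition of the difference:
\begin{equation*}
\sum_x p(x)\bigl(p_{\text{old}}(x)-1\bigr)A(x)=\sum_x p(x)\bigl(p(x)-1\bigr)A(x)+\sum_x p(x)\bigl(p_{\text{old}}(x)-p(x)\bigr)A(x).
\end{equation*}
Bound each term separately.

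\begin{itemize}
\item \emph{First term.} Since $0\le p\le 1$, its absolute value is at most $A_{\max}\sum_x p(x)(1-p(x))=A_{\max}\bigl(1-\sum_x p(x)^2\bigr)=A_{\max}\,\eta(\pi_\theta)$.
\item \emph{Second term.} Using $p(x)\le 1$, its absolute value is at most $A_{\max}\sum_x|p_{\text{old}}(x)-p(x)|=2A_{\max}\,d_{\mathrm{TV}}(\pi_\theta,\pi_{\theta_{\text{old}}})$.
\end{itemize}

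The triangle inequality then gives the per-token bound $A_{\max}\bigl(\eta(\pi_\theta)+2d_{\mathrm{TV}}\bigr)$.

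The main obstacle is interpretive rather than computational. The statement writes both sides as sums over $\tau_t\in\tau_\epsilon$ under trajectory-level expectations. Read literally, the two expectations are taken under different sampling processes, so the prefixes and the number of external tokens may differ. I will resolve this by treating the proposition as a per-position statement: the prefix is shared and only the distribution of the current external token differs, matching the conditional form of $\eta$ and $d_{\mathrm{TV}}$ in the statement.

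Under that reading, each summand is handled by the tower property: condition on the common prefix and apply the per-token bound. If the sum is normalized by $1/|\tau_\epsilon|$, as in \Cref{eq:modified_importance_sampling}, the bound passes through the average unchanged, with $\eta$ and $d_{\mathrm{TV}}$ read as worst-case, or averaged, over contexts. Without normalization it holds per token and scales with $|\tau_\epsilon|$. I would make this convention explicit at the start of the proof.
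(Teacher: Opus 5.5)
Your proof is correct and is essentially the paper's argument. The paper also works per token: it first bounds the difference by $A_{\max}\sum_{\tau_t}\pi_\theta\bigl(1-\pi_{\theta_{\text{old}}}\bigr)=A_{\max}\bigl(1-\langle\pi_\theta,\pi_{\theta_{\text{old}}}\rangle\bigr)$, then adds and subtracts $\|\pi_\theta\|_2^2$ and uses $\|\pi_\theta\|_\infty\le 1$, which is your split into the $\eta(\pi_\theta)$ and $2d_{\mathrm{TV}}$ terms with $A_{\max}$ pulled out at a different stage; your per-position reading and the assumption $\pi_\theta\ll\pi_\epsilon$ are conventions the paper uses without stating them.
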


\begin{proof}
Expanding the true importance-weighted term gives
\begin{equation}
\begin{aligned}
    \mathbb{E}_{\pi_\epsilon}
    \left[
    \frac{
    \pi_\theta(\tau_t \mid \tau_{<t})
    }{
    \pi_\epsilon(\tau_t \mid I_t)
    }
    \tilde{A}_t
    \right]
    &=
    \sum_{\tau_t}
    \pi_\epsilon(\tau_t \mid I_t)
    \frac{
    \pi_\theta(\tau_t \mid \tau_{<t})
    }{
    \pi_\epsilon(\tau_t \mid I_t)
    }
    \tilde{A}_t \\
    &=
    \sum_{\tau_t}
    \pi_\theta(\tau_t \mid \tau_{<t}) \tilde{A}_t .
\end{aligned}
\end{equation}
Meanwhile, the amended term under $\pi_{\theta_{\text{old}}}$ can be written as
\begin{equation}
    \mathbb{E}_{\theta_{\text{old}}}
    \left[
    \pi_\theta(\tau_t \mid \tau_{<t}) \tilde{A}_t
    \right]
    =
    \sum_{\tau_t}
    \pi_{\theta_{\text{old}}}(\tau_t \mid \tau_{<t})
    \pi_\theta(\tau_t \mid \tau_{<t})
    \tilde{A}_t .
\end{equation}
Therefore, the approximation error is
\begin{equation}
\begin{aligned}
    \Delta
    &=
    \left|
    \sum_{\tau_t}
    \pi_\theta(\tau_t \mid \tau_{<t})
    \tilde{A}_t
    -
    \sum_{\tau_t}
    \pi_{\theta_{\text{old}}}(\tau_t \mid \tau_{<t})
    \pi_\theta(\tau_t \mid \tau_{<t})
    \tilde{A}_t
    \right| \\
    &=
    \left|
    \sum_{\tau_t}
    \pi_\theta(\tau_t \mid \tau_{<t})
    \left(
    1 -
    \pi_{\theta_{\text{old}}}(\tau_t \mid \tau_{<t})
    \right)
    \tilde{A}_t
    \right| .
\end{aligned}
\end{equation}
Using $|\tilde{A}_t| \leq A_{\max}$, we obtain
\begin{equation}
    \Delta
    \leq
    A_{\max}
    \sum_{\tau_t}
    \pi_\theta(\tau_t \mid \tau_{<t})
    \left(
    1 -
    \pi_{\theta_{\text{old}}}(\tau_t \mid \tau_{<t})
    \right).
\end{equation}
The summation can be rewritten as
\begin{equation}
\begin{aligned}
    \sum_{\tau_t}
    \pi_\theta
    \left(
    1 -
    \pi_{\theta_{\text{old}}}
    \right)
    &=
    1 -
    \langle \pi_\theta, \pi_{\theta_{\text{old}}} \rangle \\
    &=
    1 -
    \|\pi_\theta\|_2^2
    +
    \langle
    \pi_\theta,
    \pi_\theta - \pi_{\theta_{\text{old}}}
    \rangle \\
    &\leq
    \eta(\pi_\theta)
    +
    \|\pi_\theta\|_\infty
    \|\pi_\theta - \pi_{\theta_{\text{old}}}\|_1 \\
    &\leq
    \eta(\pi_\theta)
    +
    2 d_{\mathrm{TV}}
    \left(
    \pi_\theta,
    \pi_{\theta_{\text{old}}}
    \right),
\end{aligned}
\end{equation}
where $\eta(\pi_\theta)=1-\|\pi_\theta\|_2^2$ and $\|\pi_\theta\|_\infty \leq 1$.
Combining the above inequalities yields
\begin{equation}
    \Delta
    \leq
    A_{\max}
    \left(
    \eta(\pi_\theta)
    +
    2 d_{\mathrm{TV}}
    \left(
    \pi_\theta,
    \pi_{\theta_{\text{old}}}
    \right)
    \right).
\end{equation}
This completes the proof.
\end{proof}

\Cref{prop:error_analysis} shows that the approximation error is controlled by two factors: the concentration of the current policy distribution and the total variation distance between the current and old policies. The first term is often small in practice, as the LLM typically assigns high probability to a few tokens and low probability to the rest~\citep{abs-2506-01939,abs-2506-14758}, while the second term is typically constrained by the KL regularization or clipping mechanism used in RLVR optimization. 
% Together, these factors justify the amended importance sampling coefficient as an efficient and practically stable approximation. 
This provides a theoretical justification for the amended importance sampling coefficient as a bounded and computationally efficient approximation.
 
\section{More Implementation Details} \label{app:implementation-details}

\subsection{Training Details} \label{app:training_details}
Training utilized the veRL~\citep{ShengZYWZZPL025} and vLLM~\citep{KwonLZ0ZY0ZS23} frameworks on the clusters equipped with NVIDIA A100 GPUs. 
\Cref{tab:train_param} presents the detailed training parameters for \name.

\begin{table}[ht]
    \centering
    \caption{Training Parameters.} \label{tab:train_param}
    \begin{tabular}{lc}
        \toprule
        \bf Parameters & \bf Values \\
        \midrule
        Batch Size & 256 \\
        Number of Rollout Per Question & 8 \\
        Rollout Temperature & 1.0 \\
        Rollout Top-\(p\) & 1.0 \\
        Rollout Top-\(k\) & -1 \\
        Maximum Number of Generation Tokens & 16384 \\
        Learning Rate & 1e-6 \\
        KL Loss Coefficient & 0.001 \\
        \(\epsilon_{\text{min}}\) & 0.2 \\
        \(\epsilon_{\text{max}}\) & 0.28 \\
        \(\omega \) & 0.2 \\
        Gradient Clipping & 1.0 \\
        Number of Training Steps & 200 \\
        \bottomrule
    \end{tabular}
\end{table}

\subsection{Full Training Prompt} 

Prompt \ref{pt:full_prompt} illustrates the full training prompt. 

\begin{prompt}{Full Training Prompt}{full_prompt} 

    Solve the problem step by step and present your final answer in \say{$\backslash\!$ boxed\{...\}}.. You have access to three agents. You should use them during the reasoning process.

    \subsection*{Agents}

    \subsubsection*{Verify Agent}
    Checks whether an intermediate conclusion is correct.

    Usage: Wrap the conclusion in \texttt{<verify>}...\texttt{</verify>} like this:
    \texttt{<verify>}
    Since all mammals are warm-blooded and dolphins are warm-blooded, dolphins must be mammals.
    \texttt{</verify>}

    \subsubsection*{Knowledge Agent}
    Retrieves background knowledge or facts.

    Usage: Wrap your query in \texttt{<retrieval>}...{</retrieval>} like this:
    \texttt{<retrieval>}
    What is the difference between deductive and inductive reasoning?
    \texttt{</retrieval>}

    \subsubsection*{Reasoning Agent}
    Handles a sub-task.

    Usage: Wrap the sub-task in \texttt{<reason>}...\texttt{</reason>} like this:
    \texttt{<reason>}
    A train travels 300 km at 75 km/h, then 200 km at 100 km/h. What is the total travel time?
    \texttt{</reason>}

    \subsection*{Guidelines}

    \begin{itemize}
        \item \textbf{One call, one purpose}: Each call should target a specific, well-scoped question or claim — avoid vague or speculative queries.
        \item \textbf{Integrate results}: Always reason from the agent's \texttt{<result>} before continuing; do not ignore or contradict it without justification.
        \item \textbf{Agent selection}: Uncertain about a conclusion $\rightarrow$ \textbf{Verify}; Missing a fact or definition $\rightarrow$ \textbf{Knowledge}; Sub-problem is computationally or logically demanding $\rightarrow$ \textbf{Reasoning}
    \end{itemize}
    
    \subsection*{Output}
    $\backslash\!$ boxed\{YOUR\_CONCLUSIVE\_ANSWER\_HERE\}

\end{prompt} 

Prompt \ref{pt:verify_prompt}, Prompt \ref{pt:knowledge_prompt}, and Prompt \ref{pt:reason_prompt} illustrate the full prompts for the three collaborators, respectively.

\subsection{Collaborator Prompt} \label{app:collaborator_prompt}

\begin{prompt}{Verify Agent Prompt}{verify_prompt} 
    You are a Verify Agent that supports a policy model's reasoning by evaluating the correctness of intermediate conclusions.

    The policy model submits a conclusion enclosed in \texttt{<verify>} and \texttt{</verify>} tags. You must respond with the verification result enclosed in \texttt{<result>} and \texttt{</result>} tags.

    \subsubsection*{Task}

    Given the conclusion, determine whether it is logically sound and factually accurate. If it is correct, state so directly. If it is incorrect, identify the specific flaw and provide a concrete corrective suggestion.

    \subsubsection*{Output Format}

    \texttt{<result>}

    Verdict: CORRECT | INCORRECT

    Analysis: [Required only if INCORRECT. Identify the exact step, assumption, or inference that fails and explain why.]

    Suggestion: [Required only if INCORRECT. Provide a revised conclusion, an alternative reasoning path, or the conditions under which the original would hold.]

    \texttt{</result>}

    \subsubsection*{Behavioral Guidelines}

    \begin{itemize}
        \item For correct conclusions, state the verdict only. Do not pad with unnecessary commentary.
        \item For incorrect conclusions, make the analysis precise and thorough enough for the policy model to self-correct.
        \item Evaluate only what is submitted. Do not infer or expand beyond the \texttt{<verify>} block unless logically necessary.
        \item Apply equally rigorous standards across all domains — mathematical, logical, commonsense, or abstract reasoning.
        \item If a conclusion is ambiguous or unverifiable without missing context, state this explicitly rather than guessing.
        \item Output the result only. Do not include any preamble, explanation, or commentary outside the \texttt{<result>} block.
    \end{itemize}

    \subsubsection*{Example}

    \paragraph{Input:}

    \texttt{<verify>} Since all mammals are warm-blooded, and dolphins are warm-blooded, it follows that dolphins are mammals. \texttt{</verify>}

    \paragraph{Output:}

    \texttt{<result>}

    Verdict: INCORRECT

    Analysis: This conclusion commits the fallacy of affirming the consequent. The argument takes the form "All A are B; X is B; therefore X is A", which is deductively invalid. Being warm-blooded is a necessary but not sufficient condition for being a mammal — birds, for instance, are also warm-blooded yet are not mammals.

    Suggestion: To validly conclude that dolphins are mammals, the argument must show that dolphins satisfy all defining criteria of the class Mammalia (e.g., mammary glands, live birth, air-breathing), not merely one shared property.

    \texttt{</result>}

    \{input\}
\end{prompt}

\begin{prompt}{Knowledge Agent Prompt}{knowledge_prompt}
    You are a Knowledge Agent that supports a policy model's reasoning by retrieving relevant background knowledge.

    The policy model submits a query enclosed in \texttt{<retrieval>} and \texttt{</retrieval>} tags. You must respond with retrieved knowledge enclosed in \texttt{<result>} and \texttt{</result>} tags.

    \subsubsection*{Task}

    Given the query, generate \textbf{six} short knowledge documents. Five should directly support answering the query; \textbf{one should be a noise document} — plausible-sounding but subtly off-topic, tangentially related, or mildly misleading. The noise document should blend naturally with the others and must \textbf{not} be labeled or marked in any way.

    Each document should be approximately 30-50 words, written in a factual, encyclopedic style. The five genuine documents should collectively cover key facts, definitions, principles, and context from diverse angles.

    The global query and ground truth are:

    \texttt{<global\_query>}

    \{global\_query\}

    \texttt{</global\_query>}

    \texttt{<ground\_truth>}

    \{ground\_truth\}

    \texttt{</ground\_truth>}

    \subsubsection*{Noise Document Guidelines}

    The noise document should exhibit one or more of the following properties:
    \begin{itemize}
        \item Addresses a related but different concept than what the query requires
        \item Contains accurate facts that are irrelevant to answering the query
        \item Introduces a subtle conceptual conflation or category shift
        \item Discusses an adjacent domain that sounds relevant but does not actually help
    \end{itemize}

    The noise document must \textbf{not} contain outright falsehoods, and must \textbf{not} be obviously off-topic. It should require careful reasoning to identify as non-contributory.

    \subsubsection*{Output Format}

    Place the noise document at a random position among the six (do not always put it last).

    \texttt{<result>}

    Doc 1: ...

    Doc 2: ...

    Doc 3: ...

    Doc 4: ...

    Doc 5: ...

    Doc 6: ...

    \texttt{</result>}

    \subsubsection*{Behavioral Guidelines}

    \begin{itemize}
        \item Each document must be self-contained and independently useful in appearance.
        \item The five genuine documents should cover the query from diverse angles — definitions, examples, background, mechanisms, edge cases.
        \item Write all documents in a neutral, factual tone. Do not speculate or editorialize.
        \item Do not label, flag, or distinguish the noise document in any way. 
        \item Output the result only. Do not include any preamble, explanation, or commentary outside the \texttt{<result>} block.
    \end{itemize}

    \subsubsection*{Example}

    \paragraph{Input:}

    \texttt{<retrieval>} What is the difference between deductive and inductive reasoning? \texttt{</retrieval>}

    \paragraph{Output:}

    \texttt{<result>}

    Doc 1: Deductive reasoning draws specific conclusions from general premises. If the premises are true and the argument valid, the conclusion is guaranteed. Example: "All humans are mortal; Socrates is human; therefore Socrates is mortal."

    Doc 2: Inductive reasoning generalizes from specific observations to broader conclusions. Unlike deduction, inductive conclusions are probable but never certain, even when all observed instances support them.

    Doc 3: Abductive reasoning, often used in medical diagnosis and detective work, selects the simplest and most likely explanation for a set of observations. It is sometimes called "inference to the best explanation."

    Doc 4: The key distinction lies in truth-preservation. Deduction is truth-preserving — valid arguments cannot have true premises and a false conclusion. Induction is ampliative — conclusions go beyond what the evidence strictly guarantees.

    Doc 5: Inductive reasoning is foundational in empirical science. Scientists observe patterns across experiments and infer general laws, though those laws remain provisional and subject to revision upon new evidence.

    Doc 6: Deductive reasoning is the standard in formal logic and mathematics, where proofs must be certain. Induction is common in everyday reasoning and hypothesis formation, where certainty is unattainable but probability suffices.

    \texttt{</result>}

    *(In this example, Doc 3 is the noise document — abductive reasoning is a real and related concept, but it does not address the deductive/inductive distinction the query asks about.)*

    \{input\}
\end{prompt} 

\begin{prompt}{Reason Agent Prompt}{reason_prompt}
    You are a Reasoning Agent that supports a policy model's reasoning by resolving sub-tasks it encounters during its reasoning process.

    The policy model submits a sub-task enclosed in \texttt{<reason>} and \texttt{</reason>} tags. You must respond with the resolved result enclosed in \texttt{<result>} and \texttt{</result>} tags.

    \subsubsection*{Task}

    Given the sub-task, produce a concise, direct resolution in a single coherent paragraph. Work through the problem in a logical, forward-moving manner and conclude with an explicit final answer. Do not use bullet points, numbered steps, or headers — write as a compact, self-contained reasoning trace that the policy model can immediately read and incorporate.

    \subsubsection*{Output Format}

    \texttt{<result>}

    [Reasoning trace leading to the answer, written as a single paragraph.]

    \texttt{</result>}

    \subsubsection*{Behavioral Guidelines} 

    \begin{itemize}
        \item Resolve only the sub-task as stated. Do not speculate about the broader reasoning context or attempt to solve the parent task.
        \item Express only the reasoning necessary to reach the answer. Omit redundant elaboration.
        \item State the final answer explicitly and unambiguously so the policy model can incorporate it directly.
        \item Apply the same standards across all sub-task types — mathematical, logical, commonsense, causal, or otherwise.
        \item If the sub-task is underspecified or admits multiple valid answers, state this clearly and provide the most reasonable resolution given available information.
        \item Output the result only. Do not include any preamble, explanation, or commentary outside the \texttt{<result>} block.
    \end{itemize}

    \{input\}
\end{prompt}

\subsection{Inference Prompt}

To focus on distilling knowledge and capabilities from the teacher LLM, we prohibit the trained student LLM from interacting with the teacher LLM during the inference phase. 
For mathematical and puzzle reasoning benchmarks, we employ the prompt specified in Prompt \ref{pt:math_prompt}. 
For science and code reasoning benchmarks, we use the default prompts provided with the original benchmarks.

\begin{prompt}{Prompt for Mathematical Reasoning Benchmarks}{math_prompt}
    \texttt{\{question\}}
    
    Please reason step by step, and put your final answer within $\backslash\!$ boxed\{...\}.
\end{prompt}

\subsection{Training Data} \label{app:train_data}
The training data of \name is composed of three parts:
\begin{itemize}[leftmargin=5mm]
    \vspace{0.2em}
    \item \textbf{DAPO-Math-17K.} DAPO-Math-17K~\citep{abs-2503-14476} is a dataset comprising 17,000 mathematical problems with integer answers, specifically designed for large-scale reinforcement learning of LLMs. The dataset was meticulously curated to ensure accurate reward signals by collecting questions and answers from the Art of Problem Solving (AoPS) website and competition homepages, followed by manual annotation and conversion to unify answers in integer form. We utilize the English subset, consisting of 14,000 questions, for training.
    
    \vspace{0.2em}
    \item \textbf{OpenScienceReasoning-2.} OpenScienceReasoning-2 is a multi-domain synthetic dataset aimed at enhancing general-purpose reasoning in LLMs. It includes multiple-choice and open-ended question-answer pairs with detailed reasoning traces, covering diverse scientific domains such as STEM, law, economics, and humanities. We randomly sample 20,000 examples from the original dataset for training.
    
\end{itemize}

\subsection{Evaluation Benchmarks} \label{app:eval_data}

The following details describe our evaluation benchmarks:
\begin{itemize}[leftmargin=5mm]
    \vspace{0.2em}
    \item \textbf{AIME24.} AIME24 comprises 30 challenging questions from the 2024 American Invitational Mathematics Examination (AIME), designed to test advanced mathematical reasoning skills.
    
    \vspace{0.2em}
    \item \textbf{AIME25.} AIME25 includes 30 challenging questions from the 2025 American Invitational Mathematics Examination (AIME), focusing on complex mathematical problem-solving.
    
    \vspace{0.2em}
    \item \textbf{MATH500.} The original MATH dataset~\citep{HendrycksBKABTS21} contains 12,500 problems from American high school mathematics competitions. For this study, we use MATH500~\citep{LightmanKBEBLLS24}, a subset of the test split consisting exclusively of Level 5 questions.
    
    \vspace{0.2em}
    \item \textbf{LiveMathBench.} LiveMathBench~\citep{abs-2412-13147} is a continuously updated dataset of challenging mathematical problems. We utilize the May 2025 hard split, which includes 100 questions in English.
    
    \vspace{0.2em}
    \item \textbf{GPQA.} The Graduate-Level Google-Proof Q\&A Benchmark (GPQA)~\citep{abs-2311-12022} is a challenging dataset of professional-level, multiple-choice science questions. We evaluate on its diamond subset, comprising 198 questions.
    
    \vspace{0.2em}
    \item \textbf{MBPP.} The Mostly Basic Programming Problems (MBPP) dataset~\citep{abs-2108-07732} evaluates programming models on basic Python tasks. Constructed via crowdsourcing, the problems and solutions undergo revision and manual inspection to ensure clarity and accurate test cases.
    
    \vspace{0.2em}
    \item \textbf{LiveCodeBench.} LiveCodeBench~\citep{JainHGLYZWSSS25} is a benchmark for comprehensive and uncontaminated evaluation of LLM code-related capabilities, incorporating questions from LeetCode, AtCoder, and Codeforces.
    
    \vspace{0.2em}
    \item \textbf{Reasoning-Gym.} Reasoning-Gym~\citep{abs-2505-24760} is a community-developed Python library featuring procedural dataset generators and algorithmically verifiable reasoning environments for training reasoning models with RL. It encompasses over 100 tasks across domains including algebra, arithmetic, computation, cognition, geometry, graph theory, logic, and various games. We generate 270 samples for evaluation, with each of 27 configurations producing 10 sample using the following configurations. 
    
    \begin{lstlisting}[
        language=Python,
        basicstyle=\fontsize{8.5pt}{10pt}\selectfont\ttfamily,
        breaklines=true,          
        breakatwhitespace=false,  
        breakindent=2em,         
        postbreak=\mbox{$\hookrightarrow$\space}, 
    ]
    tasks = [
        % String matching
        ("ab",  1.0, {"seed": 42, "length": 10, "size": size}),
        ("ab",  1.0, {"seed": 42, "length": 15, "size": size}),

        % Geometry & spatial
        ("acre",              1.0, {"seed": 42,                                         "size": size}),
        ("advanced_geometry", 1.0, {"seed": 42, "min_coord": -100, "max_coord": 100, "size": size}),

        % Language & logic
        ("aiw",         1.0, {"seed": 42, "max_entities": 10,                  "size": size}),
        ("cryptarithm", 1.0, {"seed": 42, "min_words":    5, "max_words":  20, "size": size}),

        % Puzzles & games 
        ("dice",      1.0, {"seed": 42, "num_dice": 5, "max_dice_size": 30, "size": size}),
        ("futoshiki", 1.0, {"seed": 42,                                      "size": size}),

        % Game of Life (3 / 4 / 5 steps) 
        ("game_of_life", 1.0, {"seed": 42, "grid_size_x": 30, "grid_size_y": 30, "simulation_steps": 3, "size": size}),
        ("game_of_life", 1.0, {"seed": 42, "grid_size_x": 30, "grid_size_y": 30, "simulation_steps": 4, "size": size}),
        ("game_of_life", 1.0, {"seed": 42, "grid_size_x": 30, "grid_size_y": 30, "simulation_steps": 5, "size": size}),
        ("game_of_life_halting", 1.0, {
            "seed": 42, "grid_size_x": 30, "grid_size_y": 30,
            "difficulty": 3, "num_oscillators": 8, "max_simulation_steps": 40,
            "size": size,
        }),

        % Planning & search
        ("jugs",        1.0, {"seed": 42, "difficulty": 20, "size": size}),
        ("knight_swap", 1.0, {"seed": 42,                   "size": size}),
        ("rush_hour",   1.0, {"seed": 42, "min_moves":  10, "size": size}),

        % Knights & Knaves (n=3 and n=5) 
        ("knights_knaves", 1.0, {
            "seed": 42, "n_people": 3, "depth_constraint": 3, "width_constraint": 3, "size": size,
        }),
        ("knights_knaves", 1.0, {
            "seed": 42, "n_people": 5, "depth_constraint": 5, "width_constraint": 5, "size": size,
        }),

        % Memory & retrieval
        ("mahjong_puzzle",  1.0, {"seed": 42, "min_num_rounds":     30, "size": size}),
        ("needle_haystack", 1.0, {"seed": 42, "min_num_statements": 50, "size": size}),

        % Quantum & constraint 
        ("quantum_lock", 1.0, {"seed": 42, "difficulty": 10, "size": size}),
        ("quantum_lock", 1.0, {"seed": 42, "difficulty": 20, "size": size}),

        % Self-referential
        ("self_reference", 1.0, {"seed": 42, "difficulty": 10, "size": size}),

        % Classic logic puzzles 
        ("sudoku", 1.0, {"seed": 42, "size": size}),

        % Zebra puzzles (4x4 -> 7x7)
        ("zebra_puzzles", 1.0, {"seed": 42, "num_people": 4, "num_characteristics": 4, "size": size}),
        ("zebra_puzzles", 1.0, {"seed": 42, "num_people": 5, "num_characteristics": 5, "size": size}),
        ("zebra_puzzles", 1.0, {"seed": 42, "num_people": 6, "num_characteristics": 6, "size": size}),
        ("zebra_puzzles", 1.0, {"seed": 42, "num_people": 7, "num_characteristics": 7, "size": size}),
    ]

    \end{lstlisting}
\end{itemize}

\section{Additional Experimental Results and Analysis}

\subsection{Performance of External LLMs} 
\Cref{tab:teacher_performance} shows the performance of Qwen3-30B-A3B-Instruct-2507, which is utilized to implement the external collaborator in this work. 
\begin{table}[ht]
    \centering
    \caption{Performance of Qwen3-30B-A3B-Instruct-2507.} \label{tab:teacher_performance}
    \vspace{0.5em}
    \resizebox{1.0\textwidth}{!}{
        \begin{tabular}{lccccccccc}
            \toprule
            \multirow{3}{*}{\bf Methods} & \multicolumn{4}{c}{\bf Math} & \multicolumn{1}{c}{\bf Science} & \multicolumn{2}{c}{\bf Code} & \multicolumn{1}{c}{\bf Puzzle} \\
            \cmidrule(r){2-5} \cmidrule(r){6-6} \cmidrule(r){7-8} \cmidrule(r){9-9}
            & \bf AIME24 & \bf AIME25 & \bf MATH500 & \bf LMB & \bf GPQA-D & \bf MBPP & \bf LCB & \bf RG \\
            & \bf Avg@\(16\) & \bf Avg@\(16\) & \bf Avg@\(4\) & \bf Avg@\(4\) & \bf Avg@\(4\) & \bf Avg@\(4\) & \bf Avg@\(4\) & \bf Avg@\(4\) \\
            \midrule
            \multicolumn{10}{c}{\textit{Qwen3-30B-A3B-Instruct-2507}} \\
            \midrule
            Original & 76.88 & 63.96 & 96.75 & 44.50 & 55.18 & 84.05 & 44.74 & 19.54 \\
            \bottomrule
        \end{tabular}
    }
\end{table}

\subsection{Performance of Reinforcement Learning on Llama3.2-3B-Instruct} \label{app:llama3_rl_performance} 

\Cref{tab:llama3_rl_performance} presents the performance of different RLVR algorithms on Llama3.2-3B-Instruct.

\begin{table}[ht]
    \centering
    \caption{Experimental results of reinforcement learning on Llama3.2-3B-Instruct. We report the average performance for 16 runs on AIME24 and AIME25, and 4 runs on the others, as well as the improvement of \name over LUFFY. We abbreviate LMB as LiveMathBench v202505, LCB as LiveCodeBench v6, and RG as Reasoning Gym. \(\spadesuit\) denotes the in-domain evaluation benchmark and \(\clubsuit\) denotes the out-of-domain benchmark. The RL performance of Llama is provided in \Cref{app:llama3_rl_performance}.} \label{tab:llama3_rl_performance}
    % \vspace{1.3em}
    \resizebox{1.0\textwidth}{!}{
        \begin{tabular}{lccccccccc}
            \toprule
            \multirow{3}{*}{\bf Methods} & \multicolumn{4}{c}{\bf Math \(\spadesuit\)}  & \multicolumn{1}{c}{\bf Science \(\spadesuit\)}  & \multicolumn{2}{c}{\bf Code \(\clubsuit\)}  & \multicolumn{1}{c}{\bf Puzzle \(\clubsuit\)}  \\
            \cmidrule(r){2-5} \cmidrule(r){6-6} \cmidrule(r){7-8} \cmidrule(r){9-9}
            & \bf AIME24 & \bf AIME25 & \bf MATH500 & \bf LMB & \bf GPQA-D & \bf MBPP & \bf LCB & \bf RG \\
            & \bf Avg@\(16\) & \bf Avg@\(16\) & \bf Avg@\(4\) & \bf Avg@\(4\) & \bf Avg@\(4\) & \bf Avg@\(4\) & \bf Avg@\(4\) & \bf Avg@\(4\) \\
            \midrule
            \rowcolor{lightgray!30} \multicolumn{10}{c}{\textit{Llama3.2-3B-Instruct}} \\
            \midrule
            Original & 3.8 & 0.3 & 40.6 & 2.3 & 28.9 & 35.9 & 2.0 & 0.2 \\
            GRPO & 13.8 & 12.5 & 62.8 & 6.8 & 34.7 & 38.3 & 7.1 & 3.6 \\
            PRIME & 12.1 & 11.2 & 60.0 & 5.4 & 33.1 & 37.0 & 6.5 & 3.3 \\
            Dr.GRPO & 14.4 & 13.1 & 63.3 & 6.5 & 34.3 & 38.9 & 7.8 & 4.2 \\
            \bottomrule
        \end{tabular}
    }
\end{table}

\subsection{\name on Larger Policy Models} \label{app:larger_models}

In this section, we evaluate the effectiveness of \name on LLMs with larger parameter sizes, specifically training Qwen2.5-32B-Instruct~\citep{abs-2412-15115} with \name. 
As shown in \Cref{tab:larger_performance}, the evaluation results demonstrate that \name remains effective for larger-scale models, with \name-trained models outperforming baseline models across all benchmarks. 
Notably, the performance improvements for Qwen2.5-32B-Instruct are more pronounced compared to those for Qwen2.5-7B-Instruct. This enhanced improvement may stem from the 32B model's stronger baseline capabilities, enabling it to formulate higher-quality questions and acquire knowledge more efficiently during training with \name.

\begin{table}[ht]
    \centering
    \caption{Experimental results of \name with Qwen2.5-32B-Instruct. We report the average performance for 16 runs on AIME24 and AIME25, and 4 runs on others. We abbreviate LMB as LiveMathBench v202505, LCB as LiveCodeBench v6, and RG as Reasoning Gym.} \label{tab:larger_performance}
    \vspace{0.5em}
    \resizebox{1.0\textwidth}{!}{
        \begin{tabular}{lccccccccc}
            \toprule
            \multirow{3}{*}{\bf Methods} & \multicolumn{4}{c}{\bf Math} & \multicolumn{1}{c}{\bf Science} & \multicolumn{2}{c}{\bf Code} & \multicolumn{1}{c}{\bf Puzzle} \\
            \cmidrule(r){2-5} \cmidrule(r){6-6} \cmidrule(r){7-8} \cmidrule(r){9-9}
            & \bf AIME24 & \bf AIME25 & \bf MATH500 & \bf LMB & \bf GPQA-D & \bf MBPP & \bf LCB & \bf RG \\
            & \bf Avg@\(16\) & \bf Avg@\(16\) & \bf Avg@\(4\) & \bf Avg@\(4\) & \bf Avg@\(4\) & \bf Avg@\(4\) & \bf Avg@\(4\) & \bf Avg@\(4\) \\
            \midrule
            \multicolumn{10}{c}{\textit{Qwen2.5-32B-Instruct} \; $\leftrightarrow$ \; \textit{Qwen2.5-32B-Instruct}} \\
            \midrule
            Original & 29.88 & 28.62 & 96.35 & 27.75 & 62.60 & 80.62 & 40.21 & 28.56 \\
            SFT & 32.61 & 31.26 & 98.27 & 30.33 & 65.31 & 83.57 & 42.87 & 31.13 \\
            GRPO & 35.71 & 34.48 & 98.56 & 33.52 & 68.36 & 86.71 & 46.03 & 34.24 \\
            \rowcolor{cyan!10} \name & \bf 39.06 & \bf 39.69 & \bf 99.32 & \bf 38.87 & \bf 72.55 & \bf 90.24 & \bf 48.41 & \bf 43.98 \\
            \bottomrule
        \end{tabular}
    }
\end{table}

\subsection{\name on Different RLVR Algorithms} \label{app:rlvr_algorithms}

To evaluate the generalization capability of \name across different RLVR algorithms, we implemented \name with both DAPO~\citep{abs-2503-14476} and GSPO~\citep{abs-2507-18071}. The results, presented in \Cref{tab:rlvr_performance}, show that \name delivers consistent performance improvements regardless of the underlying RLVR algorithm, thereby demonstrating its strong generalizability.

\begin{table}
    \centering
    \caption{Experimental results of \name on different RLVR algorithms. We report the average performance for 16 runs on AIME24 and AIME25, and 4 runs on others. We abbreviate LMB as LiveMathBench v202505, LCB as LiveCodeBench v6, and RG as Reasoning Gym.} \label{tab:rlvr_performance}
    \vspace{0.5em}
    \resizebox{1.0\textwidth}{!}{
        \begin{tabular}{lccccccccc}
           \toprule
            \multirow{3}{*}{\bf Methods} & \multicolumn{4}{c}{\bf Math \(\spadesuit\)}  & \multicolumn{1}{c}{\bf Science \(\spadesuit\)}  & \multicolumn{2}{c}{\bf Code \(\clubsuit\)}  & \multicolumn{1}{c}{\bf Puzzle \(\clubsuit\)}  \\
            \cmidrule(r){2-5} \cmidrule(r){6-6} \cmidrule(r){7-8} \cmidrule(r){9-9}
            & \bf AIME24 & \bf AIME25 & \bf MATH500 & \bf LMB & \bf GPQA-D & \bf MBPP & \bf LCB & \bf RG \\
            & \bf Avg@\(16\) & \bf Avg@\(16\) & \bf Avg@\(4\) & \bf Avg@\(4\) & \bf Avg@\(4\) & \bf Avg@\(4\) & \bf Avg@\(4\) & \bf Avg@\(4\) \\
            \midrule 
            DAPO & 26.1 & 21.0 & 80.8 & 14.2 & 41.1 & 62.8 & 18.7 & 15.7 \\
            \rowcolor{cyan!10} \;w/ \name & \bf 29.5 & \bf 25.3 & \bf 85.6 & \bf 17.8 & \bf 44.2 & \bf 66.4 & \bf 20.5 & \bf 18.9 \\
            \hdashline
            GSPO & 26.8 & 20.4 & 80.3 & 15.2 & 41.8 & 62.0 & 19.2 & 15.3 \\
            \rowcolor{cyan!10} \;w/ \name & \bf 30.2 & \bf 24.7 & \bf 84.1 & \bf 18.5 & \bf 45.6 & \bf 65.8 & \bf 21.3 & \bf 19.8 \\
            \bottomrule
        \end{tabular}
    }
\end{table}

\subsection{\name on Long-CoT Policy Models} \label{app:long_cot_models}

In this section, we assess the performance of \name on reasoning LLMs utilizing long CoT prompting. 
Given the substantial inference overhead of long CoT LLMs, we conduct experiments using DeepSeek-R1-Distill-Qwen-1.5B~\citep{abs-2501-12948}, with results presented in \Cref{tab:long_cot_performance}. 
The findings demonstrate that \name achieves consistent performance improvements for LLMs with extended reasoning chains, underscoring the generalization capability of \name across such models.

Additionally, we observe a performance decline in models trained with SFT. 
This may be attributed to the external policy LLM, Qwen3-30B-A3B-Instruct-2507, not being optimized for long CoT reasoning. 
Consequently, fine-tuning based on its responses may disrupt the original reasoning patterns of the original LLM, leading to degraded performance. 
In contrast, \name selectively injects knowledge via activate interactions, preserving its inherent reasoning patterns. 
This preservation represents a key advantage of \name, enhancing its effectiveness without compromising the original LLM's original reasoning capabilities.

\begin{table}[ht]
    \centering
    \caption{Experimental results of \name and baselines with DeepSeek-R1-Distill-Qwen-1.5B. We report the average performance for 16 runs on AIME24 and AIME25, and 4 runs on others. We abbreviate LMB as LiveMathBench v202505, LCB as LiveCodeBench v6, and RG as Reasoning Gym.} \label{tab:long_cot_performance}
    \vspace{0.5em}
    \resizebox{1.0\textwidth}{!}{
        \begin{tabular}{lccccccccc}
            \toprule
            \multirow{3}{*}{\bf Methods} & \multicolumn{4}{c}{\bf Math} & \multicolumn{1}{c}{\bf Science} & \multicolumn{2}{c}{\bf Code} & \multicolumn{1}{c}{\bf Puzzle} \\
            \cmidrule(r){2-5} \cmidrule(r){6-6} \cmidrule(r){7-8} \cmidrule(r){9-9}
            & \bf AIME24 & \bf AIME25 & \bf MATH500 & \bf LMB & \bf GPQA-D & \bf MBPP & \bf LCB & \bf RG \\
            & \bf Avg@\(16\) & \bf Avg@\(16\) & \bf Avg@\(4\) & \bf Avg@\(4\) & \bf Avg@\(4\) & \bf Avg@\(4\) & \bf Avg@\(4\) & \bf Avg@\(4\) \\
            \midrule
            \multicolumn{10}{c}{Student LLM: \textit{DeepSeek-R1-Distill-Qwen-1.5B},\; Teacher LLM: \textit{Qwen3-30B-A3B-Instruct-2507}} \\
            \midrule
            Original & 21.88 & 21.46 & 83.95 & 13.00 & 29.80 & 60.12 & 14.69 & 3.33 \\
            SFT & 18.35 & 19.89 & 77.16 & 14.02 & 26.64 & 55.51 & 15.27 & 10.98 \\
            GRPO & 28.43 & 25.70 & 86.82 & 17.39 & 34.68 & 65.05 & 14.72 & 13.53 \\
            \rowcolor{cyan!10} \name
             & \bf 30.56 & \bf 29.21 & \bf 88.47 & \bf 18.90 & \bf 36.53 & \bf 67.19 & \bf 17.06 & \bf 16.44 \\
            \bottomrule
        \end{tabular}
    }
\end{table}

\subsection{Case Study}

\paragraph{Training Cases. }
We present examples of \name's training trajectories in Case~\ref{em:example_aime2024} and Case~\ref{em:example_bamboogle}. These cases demonstrate the ability to formulate reasonable and meaningful interactions, leverage information from the external model, and successfully complete the reasoning process. Through this interaction, the model effectively acquires new information and enhances its inherent capabilities.

\begin{example}{Case on AIME2024}{example_aime2024}

{ \small

    \section*{Question}
    Jen enters a lottery by picking $4$ distinct numbers from $S=\{1,2,3,\cdots,9,10\}.$ $4$ numbers are randomly chosen from $S.$ She wins a prize if at least two of her numbers were $2$ of the randomly chosen numbers, and wins the grand prize if all four of her numbers were the randomly chosen numbers. The probability of her winning the grand prize given that she won a prize is $\tfrac{m}{n}$ where $m$ and $n$ are relatively prime positive integers. Find $m+n$.
    
    \section*{Answer}

    $116$

    \section*{Trajectory}

    \noindent \textbf{Reasoning Process}
        
    Find the sum $m+n$, where $\frac{m}{n}$ is the conditional probability that Jen wins the grand prize given that she wins \textit{any} prize.

    \begin{itemize}
        \item Total pool of numbers: $S = \{1, 2, \dots, 10\}$ (Size 10).
        \item Jen's pick ($J$): 4 specific numbers.
        \item Winning numbers ($W$): 4 randomly chosen numbers.
        \item Event $E$ (Prize): The intersection $|J \cap W|$ is at least 2 (i.e., 2, 3, or 4 matches).
        \item Event $G$ (Grand Prize): The intersection $|J \cap W|$ is exactly 4.
    \end{itemize}
        
    The conditional probability is $P(G|E) = \frac{N(G \cap E)}{N(E)}$. Since $G$ is a subset of $E$ (4 matches implies at least 2), this simplifies to $\frac{N(G)}{N(E)}$.
        
    To find the number of ways to match exactly $k$ numbers:
    \begin{itemize}
        \item We need to choose $k$ numbers from the 4 winning numbers: $\binom{4}{k}$.
        \item We need to choose the remaining $4-k$ numbers from the 6 non-winning numbers: $\binom{6}{4-k}$.
    \end{itemize}
        
    I need to calculate the specific combinatorial values for $k=2, 3, 4$ and their sum to determine $N(E)$ and $N(G)$. I will ask an external environment to perform these calculations. 
    
    \begin{quote}
    \texttt{<reasoning>} \\
    Calculate the number of ways to match exactly k numbers in a lottery where 4 numbers are drawn from 10. Specifically, compute the values for $C(4,k) \cdot C(6, 4-k)$ for $k=2$, $k=3$, and $k=4$, and find the sum of these three results. \\
    \texttt{</reasoning>} \\
    \texttt{<result>} \\
    For exactly (k) matches, the count is \(\binom{4}{k}\binom{6}{4-k}\). \\
    For each value: \\
    $$
    k=2:\ \binom{4}{2}\binom{6}{2}=6\cdot 15=90
    $$
    $$
    k=3:\ \binom{4}{3}\binom{6}{1}=4\cdot 6=24
    $$
    $$
    k=4:\ \binom{4}{4}\binom{6}{0}=1\cdot 1=1
    $$
    So the total is: \\
    $$
    90+24+1=115
    $$
    Final answer: \boxed{115} \\
    \texttt{</result>} \\
    \end{quote}

    The external result confirms the counts for each case.
    $N(E)$ (Total ways to win a prize): This is the sum of outcomes for $k=2, 3, 4$. According to the result, $90 + 24 + 1 = 115$.
    $N(G)$ (Ways to win grand prize): This corresponds to the $k=4$ case, which is $1$.
    The conditional probability is $\frac{1}{115}$.
    Identify $m$ and $n$: $m=1$, $n=115$.
    1 and 115 are relatively prime positive integers.
    Calculate $m+n = 1 + 115 = 116$.

    \vspace{1em}

    \noindent \textbf{Final Answer}

    \vspace{0.5em}

    The probability that Jen wins the grand prize given she wins a prize is $\frac{1}{115}$. Therefore, $m=1$ and $n=115$, and the sum $m+n$ is \boxed{116}.

}
\end{example}

\begin{example}{Case on Bamboogle}{example_bamboogle}
    \section*{Question}
    What is the capital of the second largest state in the US by area?
  
    \section*{Answer}
    austin

    \section*{Trajectory}

    \noindent \textbf{Reasoning Process}

    To determine the capital of the second largest state in the US by area, I need to follow a logical sequence.
    \begin{itemize} 
        \item First, I must definitively identify which US state ranks second in terms of total area. 
    
        \item Second, once that state is identified, I need to determine its capital city.
    \end{itemize}

    I will start by asking the external environment to identify the second largest state in the US.

    \begin{quote}
    \texttt{<retrieval>}\\
    What is the second largest state of US?\\
    \texttt{</retrieval>}
    \end{quote}

    \begin{quote}
    \texttt{<result>}\\
    Doc 1: U.S. state size is commonly measured by total area, including both land and inland water. Under this measure, Alaska is the largest state by a wide margin, with more than 660,000 square miles of total area.

    Doc 2: Texas is the second largest U.S. state by total area. It covers approximately 268,596 square miles, making it substantially larger than California and every other state except Alaska.

    Doc 3: California is often associated with large population and economic scale, but it is not the second largest U.S. state by area. Its total area is about 163,695 square miles, placing it behind Alaska and Texas.

    Doc 4: The ranking of U.S. states by area depends on whether total area or land area is used, but Texas remains second under both common measures. Alaska ranks first in both land area and total area.

    Doc 5: State area rankings differ from population rankings. Texas is among the most populous U.S. states, but its status as the second largest state refers to geographic area, not number of residents.

    Doc 6: The continental United States excludes Alaska and Hawaii when describing the contiguous forty-eight states. Within the contiguous states, Texas is the largest by area, while California is second among those states.
    \texttt{</result>}
    \end{quote}

    The external result confirms that \textbf{Texas} is the second largest state in the US by area. Now I need to find the capital of Texas.

    \begin{quote}
    \texttt{<retrieval>}\\
    What is the capital of Texas?\\
    \texttt{</retrieval>}
    \end{quote}

    \begin{quote}
    \texttt{<result>}\\
    <result>
    Doc 1: Austin is the capital city of Texas. It is located in Central Texas along the Colorado River and serves as the seat of Texas state government, including the offices of the governor and the state legislature.

    Doc 2: Texas state government is centered in Austin, where the Texas State Capitol houses the offices and chambers of the Texas Legislature. The city has served as the state capital since the nineteenth century.

    Doc 3: Houston is the largest city in Texas by population and an important economic center, but it is not the state capital. The capital designation belongs to Austin, which hosts the main institutions of state government.

    Doc 4: The Texas State Capitol is located in Austin and is one of the most recognizable government buildings in the state. Its presence reflects Austin's role as the administrative and political capital of Texas.

    Doc 5: Austin is also the county seat of Travis County and a major center for education, technology, and culture. Its civic role includes both local government functions and statewide governmental responsibilities.

    Doc 6: Capital cities are generally selected as seats of government rather than by population size or economic output. In Texas, Austin fulfills this role, while larger cities such as Houston, San Antonio, and Dallas serve other regional functions. </result>

    \texttt{</result>}
    \end{quote}

    From the retrieved information, I have verified that the capital of Texas is Austin.

    \vspace{1em}

    \noindent \textbf{Final Answer}

    The second largest state in the US by area is Texas, and its capital is \boxed{Austin}.
  
\end{example} 

\paragraph{Inference Cases. }
It is worth noting that the functional agents are primarily used during training to provide structured and fine-grained interactive guidance. 
At inference time, the policy model does not interact with the same external collaborative agents. 
Instead, the interaction patterns learned during training encourage the model to internalize agent-like behaviors, such as verifying intermediate conclusions, identifying missing knowledge, and decomposing difficult sub-problems. 
The following examples illustrate the distinction between training-time interactive exploration and inference-time reasoning. 
During training, the policy model explicitly invokes functional agents through predefined tags and receives external feedback. 
At inference time, the trained model can internalize these interaction patterns and perform verification, knowledge retrieval, and sub-problem solving as implicit reasoning behaviors, even without explicit agent calls. 

\begin{example}{Inference-Time Case on AIME2024}{example_inference_aime2024}

{ \small

    \section*{Question}
    Jen enters a lottery by picking $4$ distinct numbers from $S=\{1,2,3,\cdots,9,10\}.$ 
    $4$ numbers are randomly chosen from $S.$ 
    She wins a prize if at least two of her numbers were $2$ of the randomly chosen numbers, and wins the grand prize if all four of her numbers were the randomly chosen numbers. 
    The probability of her winning the grand prize given that she won a prize is $\tfrac{m}{n}$ where $m$ and $n$ are relatively prime positive integers. 
    Find $m+n$.
    
    \section*{Answer}

    $116$

    \section*{Trajectory}

    \noindent \textbf{Reasoning Process}

    Jen fixes a set $J$ of $4$ numbers from $10$, and the lottery randomly selects another set $W$ of $4$ numbers. 
    The prize event occurs when $|J \cap W| \geq 2$, while the grand prize event occurs when $|J \cap W|=4$.

    Since Jen's chosen set has size $4$, the number of lottery outcomes with exactly $k$ matches is
    \[
        \binom{4}{k}\binom{6}{4-k},
    \]
    where $\binom{4}{k}$ chooses the matched numbers from Jen's set and $\binom{6}{4-k}$ chooses the remaining non-matched numbers from the other $6$ numbers.

    Thus, the number of prize-winning outcomes is
    \[
    \begin{aligned}
        N(E)
        &= \binom{4}{2}\binom{6}{2}
            + \binom{4}{3}\binom{6}{1}
            + \binom{4}{4}\binom{6}{0} \\
        &= 6 \cdot 15 + 4 \cdot 6 + 1 \cdot 1 \\
        &= 90 + 24 + 1 \\
        &= 115.
    \end{aligned}
    \]
    
    The grand-prize event corresponds to exactly $4$ matches, so
    \[
        N(G)=\binom{4}{4}\binom{6}{0}=1.
    \]
    Since every grand-prize outcome is also a prize-winning outcome, we have $G \subseteq E$, and therefore
    \[
        P(G \mid E)=\frac{N(G)}{N(E)}=\frac{1}{115}.
    \]
    
    Therefore, $m=1$ and $n=115$, which gives
    \[
        m+n=116.
    \]

    \vspace{1em}

    \noindent \textbf{Final Answer}

    \vspace{0.5em}

    The conditional probability is $\frac{1}{115}$. Hence, $m=1$, $n=115$, and the final answer is $\boxed{116}$.

}
\end{example} 

Although no external agent is explicitly invoked at inference time, the model reveals the corresponding abilities learned during training:
\begin{itemize}[leftmargin=*]
    \item \textbf{Verification-like behavior:} Check whether the grand-prize event is a subset of the prize event.
    \item \textbf{Knowledge-like behavior:} Recall the combinatorial counting rule for exactly $k$ matches.
    \item \textbf{Reasoning-like behavior:} Compute the number of outcomes corresponding to $k=2,3,4$ matches.
\end{itemize}

\section{Discussions} 

\subsection{More Discussions on Related Work} \label{app:related_work} 

The line of work most closely related to \name is recent research on off-policy or external-guidance-enhanced reinforcement learning~\citep{abs-2504-14945,abs-2509-04419,abs-2508-11408,abs-2509-06948,abs-2506-05316,abs-2509-26306,abs-2506-07527,abs-2506-19767,abs-2601-18734}. 
Existing methods typically incorporate external guidance in three ways: 
\ding{182} applying supervised fine-tuning directly to expert trajectories; 
\ding{183} using logits from an external teacher model as supervisory signals; and 
\ding{184} modifying the RL objective to leverage off-policy expert trajectories. 
In contrast, \name introduces an active interaction paradigm in which the policy model proactively consults external collaborators during training to obtain fine-grained guidance, thereby producing mixed-policy trajectories. 
To effectively learn from these trajectories, we further design amended importance sampling coefficients and clipping strategies tailored to mixed-policy optimization. 

Among existing methods, the most closely related method is \citet{abs-2509-26306}, which also involves multi-agent interactions during rollout by leveraging multiple agents for debating and communication. 
However, \name differs from this work in two key aspects. 
First, \name enables more flexible and fine-grained interactions between the policy model and external collaborators. 
This design allows the model to identify its capability boundaries more precisely and expand the exploration space more effectively. 
Second, \name introduces an RLVR objective specifically designed for optimizing over mixed-policy trajectories generated through active interactions. 
By contrast, \citet{abs-2509-26306} optimizes only on on-policy trajectories, which still constrains exploration within the capability boundary of the policy model itself. 

In addition, we clarify the distinction between \name and self-reflection or self-evolution methods~\citep{ShinnCGNY23,DouY0CP24}. 
These methods typically rely on the policy model itself to generate trajectory-level feedback or supervision signals. 
In contrast, \name expands the exploration boundary of the policy model through active interactions with external collaborators, following the broader paradigm of external-guidance-enhanced reinforcement learning. 
Importantly, the learning signal in \name remains grounded in verifiable rewards rather than self-generated supervision.

\subsection{Mitigating Errors in Active Interactions} \label{app:error_mitigation}
While \name achieves significant performance improvements, active interaction with collaborators may introduce erroneous information, due to hallucinations~\citep{HuangYMZFWCPFQL25}. 
Such errors can propagate through the reasoning process and potentially mislead the policy model. 
However, \name optimizes the policy model toward maximizing expected rewards defined by verifiable outcome correctness, providing a stable learning signal that consistently favors trajectories leading to correct solutions. 
Moreover, prior studies show that, when guided by reasonable and verifiable rewards, policy models can learn to filter, retrieve, and reflect on information, thereby extracting useful signals from noisy or imperfect inputs~\citep{abs-2505-24726,KumarZASCSBIBRZ25,abs-2506-01369,abs-2509-25760,abs-2501-12948}. 
Therefore, although active interaction introduces the risk of error propagation, it also expands the exploration space and provides richer learning opportunities. 
The RLVR objective enables the policy model to exploit these benefits while progressively mitigating the negative effects of unreliable collaborator feedback.

\subsection{Analysis of Training Cost} 
In this section, we analyze the training cost of \name.
Compared with vanilla RLVR, the additional overhead of \name mainly comes from interactions with external collaborators.
When sufficient servers are available to deploy these collaborators independently, the resulting increase in wall-clock training time is nearly negligible.
Moreover, when collaborators share the same backbone LLM as the policy model, as in our setting, the overhead of \name is comparable to that of on-policy distillation and other baseline methods.
Benefiting from the active interaction paradigm, \name can further reduce unnecessary collaborator queries during training, thereby lowering the overall interaction cost.
As shown in \Cref{tab:training_cost}, \name incurs slightly higher training time than vanilla RLVR within an acceptable range while achieving substantially better performance.
This additional overhead can be further reduced by increasing the number of servers used for collaborator deployment.

\begin{table}[ht]
    \centering
    \caption{Results of training cost analysis. We use Qwen2.5-7B-Instruct as the policy model and train it on 4 A100 GPUs. For the collaborators, we deploy Qwen2.5-7B-Instruct on a single A100 GPU and Qwen3-30B-A3B-Instruct-2507 on 2 A100 GPUs. All collaborators share the same backbone LLM.} \label{tab:training_cost}
    \vspace{0.5em}
    \resizebox{0.7\textwidth}{!}{
        \begin{tabular}{lc}
            \toprule
            \bf Methods & \bf Wall-Clock Time  \\
            \midrule
            Vanilla RLVR & 12.6h \\
            \name (with shared Qwen2.5-7B as collaborators) & 14.7h \\
            \name (with shared Qwen3-30B as collaborators) & 17.2h \\
            \bottomrule
        \end{tabular}
    }
\end{table}

\section{Limitations} \label{app:limitation}

While \name achieves significant performance improvements, several areas warrant further exploration. 
First, our method is currently limited to verifiable questions with definitive answers. 
Training on open-ended questions remains an open problem in the community and is beyond the scope of this paper. 
Second, due to computational resource constraints, we conducted experiments only on LLMs up to 32B parameters. 
Given that the scaling law~\citep{abs-1712-00409,abs-2010-14701} is an important principle in the field of LLMs, investigating the performance of \name on larger-scale LLMs will be another promising direction. 
Finally, the current scope of our experimentation and analysis is limited exclusively to the English language. 
Therefore, the applicability and performance of \name with interfaces and information presented in languages other than English remain an open question. 
Addressing this linguistic limitation is crucial for establishing the generalizability of the proposed framework across diverse linguistic contexts.

% \section{LLM usage}
% In this paper, the use of LLMs is intentionally restricted to the final stages of the research process, specifically for refining and proofreading the written content. 
% The LLMs are employed solely to enhance the clarity, coherence, and grammatical accuracy of the text, ensuring effective and professional communication of the presented ideas. 
% Importantly, LLMs played no role in the core components of this work, including the development of the research methodology, the design of experiments, or the analysis of results. 
% We are aware that we will be responsible for all content in the paper.

% \clearpage
% \input{checklist}

\end{document}